\theoremstyle{plain}
\newtheorem{lemma}{Lemma}[section]
\newtheorem{proposition}[lemma]{\textbf{Proposition}}
\newtheorem{theorem}[lemma]{\textbf{Theorem}}
\newtheorem{cor}[lemma]{\textbf{Corollary}}
\theoremstyle{definition}
\newtheorem{definition}[lemma]{\textbf{Definition}}
\newtheorem*{notation}{\textbf{Notation}}
\theoremstyle{remark}
\newtheoremstyle{indenteddefinition}{8pt}{8pt}{\addtolength{\leftskip}{1.4em}}{
-1.4em}{\itshape}{.}{ }{}
\theoremstyle{indenteddefinition}
\newtheorem{remark}[lemma]{Remark}
\newcommand{\R}{\mathbb{R}}
\newcommand{\C}{\mathbb{C}}
\newcommand{\p}{\mathbb{P}}
\newcommand{\SE}{\mathrm{SE}_3}
\newcommand{\SEC}{\mathrm{SE}_{3,\C}}
\newcommand{\SO}{\mathrm{SO}_3}
\renewcommand\epsilon{\varepsilon}
\newcommand{\tth}{\thinspace}
\newcommand{\mcal}[1]{\mathcal{#1}}
\algrenewcommand\algorithmicwhile{\textbf{While}}
\algrenewcommand\algorithmicfor{\textbf{For}}
\algrenewcommand\algorithmicdo{\textbf{Do}}
\algrenewcommand\algorithmicif{\textbf{If}}
\algrenewcommand\algorithmicthen{\textbf{Then}}
\algrenewcommand\algorithmicelse{\textbf{Else}}
\algrenewcommand\algorithmicend{\textbf{End}}
\algrenewcommand\algorithmicreturn{\textbf{Return}}
\begin{document}

\title{Bond Theory for pentapods and hexapods}

\author[Gallet]{Matteo Gallet}

\address{
Research Institute for Symbolic Computation \\
Johannes Kepler University \\
Altenberger Strasse 69 \\
4040 Linz, Austria.}

\email{mgallet@risc.jku.at}

\author[Nawratil]{Georg Nawratil}

\address{
Institute of Discrete Mathematics and Geometry\\  
Vienna University of Technology \\
Wiedner Hauptstrasse 8-10/104 \\
1040 Vienna, Austria.}

\email{nawratil@geometrie.tuwien.ac.at}

\author[Schicho]{Josef Schicho}

\address{
Research Institute for Symbolic Computation \\
Johannes Kepler University \\
Altenberger Strasse 69 \\
4040 Linz, Austria.}

\email{josef.schicho@risc.jku.at}


\begin{abstract}
This paper deals with the old and classical problem of determining necessary
conditions for the overconstrained mobility of some mechanical device. In
particular, we show that the mobility of pentapods/hexapods implies either
a collinearity condition on the anchor points, or a geometric condition on the
normal projections of base and platform points. The method is based on a
specific compactification of the group of direct isometries of $\R^3$.
\end{abstract}

\maketitle


\section{Introduction}

The objects we will focus on in this paper are the so--called \emph{$n$--pods}. 
For $n=5$ they are referred as pentapods and for $n = 6$ as hexapods, which are also 
known as \emph{Stewart Gough platforms}. 
As described in \cite{Nawratil2014}, the geometry of this kind of mechanical manipulators is defined by the coordinates of the $n$ platform anchor points $p_i = (a_i, b_i, c_i) \in \R^3$ and of the $n$ base anchor points $P_i = (A_i, B_i, C_i) \in \R^3$ in one of their possible configurations. All pairs of points $(p_i, P_i)$ are connected by a rigid body, called \emph{leg}, so that for all possible configurations the distance $d_i = \left\| p_i - P_i \right\|$ is preserved. 
\begin{figure}[!ht]
	\begin{center}  
		\begin{overpic}[width=0.35\textwidth]{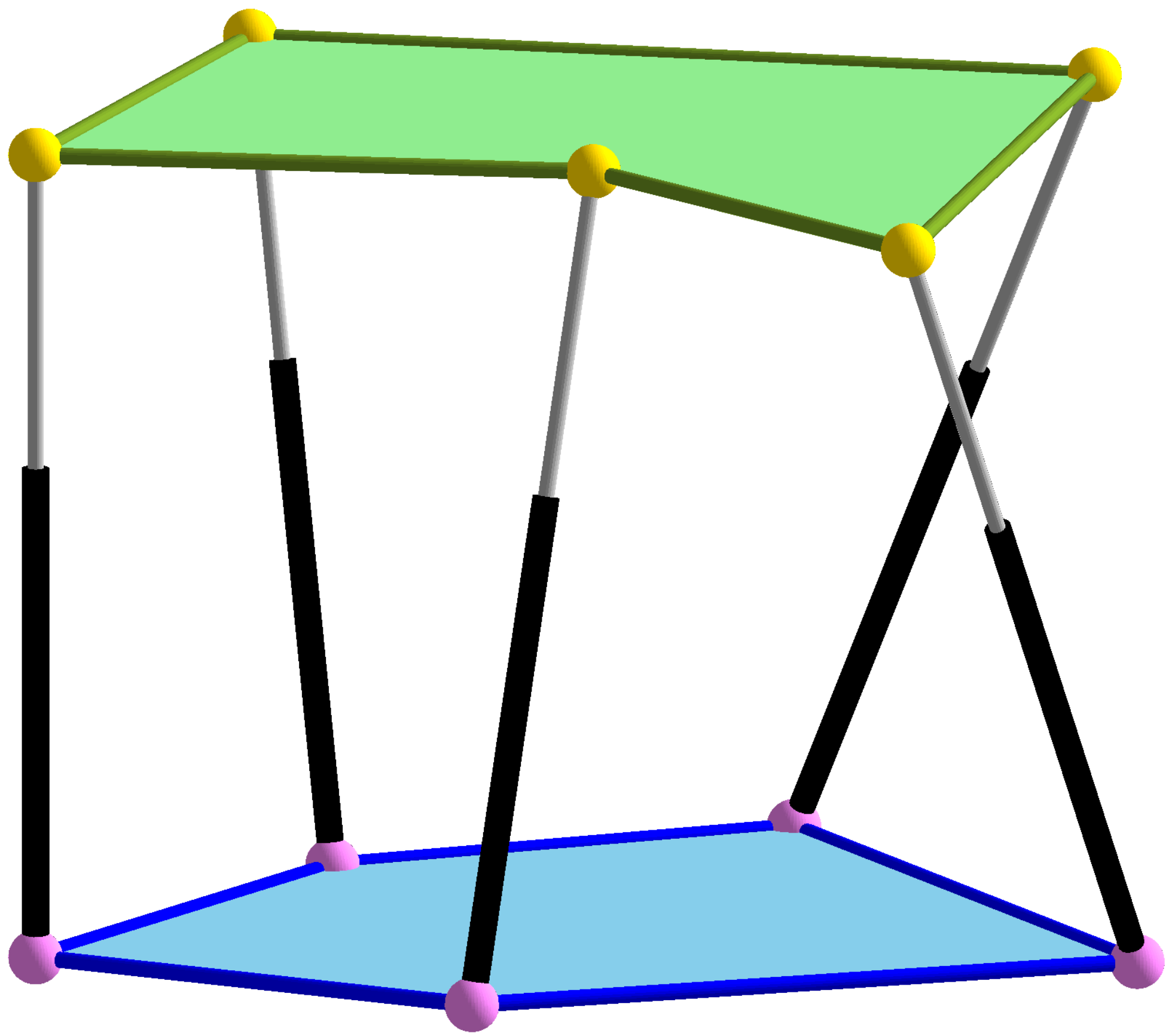}
			\begin{small}
				\put(-8,4){$P_1$}
				\put(32,-5){$P_2$}
				\put(99.5,4){$P_3$}
				\put(58,21){$P_4$}
				\put(16,16){$P_5$}
				\put(-6,72){$p_1$}
				\put(41,68){$p_2$}
				\put(69.5,61){$p_3$}
				\put(96.5,81){$p_4$}
				\put(12,87){$p_5$}
			\end{small} 
		\end{overpic} 
	\end{center}
	\caption{Sketch of an $n$-pod for $n = 5$, a pentapod.}
\end{figure}   
\begin{notation}
	We will think of an $n$--pod $\Pi$ as a triple
	\[ \Pi = \Big( (p_1,\dots,p_n),(P_1,\dots,P_n),(d_1,\dots,d_n) \Big) \]
	where $p_i$, $P_i$ and $d_i$ are defined as above.
\end{notation}
We are interested in describing the \emph{self--motions} of a given $n$--pod
$\Pi$, namely which direct isometries $\sigma$ of $\R^3$ satisfy the condition
(which is called the \emph{spherical condition})
\begin{equation}
\label{equation:spherical_condition}
	\left\| \sigma(p_i) - P_i \right\| \; = \; d_i \quad \quad \textrm{for all }
i \in \{1, \ldots, n\}
\end{equation}
In particular we want to understand what is the \emph{dimension} of the set of
these isometries, namely the \emph{mobility} of $\Pi$ (we will make these
concepts precise at the beginning of Section \ref{bond_geometric}), and what
conditions we have to impose on the base and platform points to reach a
prescribed mobility. 

In order to do so we first study the group of direct isometries of $\R^3$, and
in particular we embed and compactify it in a projective space in a way
specifically tuned for $n$--pods. The key idea behind
all of this is that, if we introduce suitable coordinates in the space of
direct isometries and we consider the condition given by Equation
(\ref{equation:spherical_condition}) then the latter becomes
linear in these coordinates. These coordinates provide the desired
compactification, and Section \ref{compactification} is devoted to the study of
some of its geometric properties, which play an important role in the proofs of
the results of the subsequent section. In particular, we focus our attention on
the natural action of direct isometries on this compactification and on its
boundary. In Section \ref{bond_geometric} we use this information to establish
some results which can be framed in the so--called \emph{bond theory} (see
\cite{HegeduesSchichoSchroecker}, \cite{Nawratil2014}): the presence of some
boundary point in the projective closure of the set of self--motions of an
$n$--pod implies precise geometric constraints on base and platform points.
The set of boundary points is a $5$--dimensional complex algebraic variety, but
we give geometric interpretation of its complex points, which form a
$10$--dimensional real algebraic variety. 
These results allow us to provide some necessary conditions for the mobility
of $n$--pods. Eventually we consider the case of $n$--pods with high mobility
(namely with strictly more than one degree of freedom) and we also provide
necessary conditions on the geometry of these devices. 

\medskip
From the kinematic point of view this paper contains the following main
results (see Corollary \ref{corollary:mob1} and Theorem
\ref{theorem:mobility_two}):

\medskip
\noindent
{\bf Result 1} {\it 
If an $n$--pod is mobile, then one of the following conditions holds:
	\begin{itemize}
	\item[(i)]
		There exists at least one pair of orthogonal projections $\pi_L$ and
$\pi_R$ such that the projections of the platform points $p_1, \ldots, p_n$ by
$\pi_L$ and of the base points $P_1, \ldots, P_n$ by $\pi_R$ differ by an
inversion or a similarity.
	\item[(ii)]
		There exists $m \leq n$ such that $p_1, \ldots, p_m$ are collinear and
$P_{m+1}, \ldots, P_n$ are collinear, up to permutation of indices.
	\end{itemize}
}

\medskip
In the following we only give one example for each of the two cases of Result 1,
as a full listing of all examples known in the literature is beyond the scope of
this paper (for some of them, see for example \cite{GeissSchreyer},  
\cite{Karger} and \cite{KargerHusty}).

\smallskip
{\bf \noindent Example ad (i):} 

It was proven by Bricard (cf.\ Chapter VI of \cite{bricard}) that there is exactly one type of non--trivial\footnote{The trivial motions with this property are translations with spherical trajectories, and the rotation of the moving system about a fixed axis or fixed point, respectively.} motions, where all points have spherical paths. 
Moreover it is well known (cf.\ page 324--325 of \cite{bottroth}) that this motion--type is a composition of a rotation about a fixed axis and a translation parallel to this axis. Without loss of generality we can assume that this axis is the $z$--axis. We can take any number of points $p_1, \ldots, p_n$ as platform anchor points, and the centers $P_1, \ldots, P_n$ of the spheres containing their paths as base anchor points. For $n \geq 6$ and generic choice of $p_1, \ldots, p_n$, we will get an $n$--pod with mobility $1$. If we project the points $p_i$ and $P_i$ onto the $xy$--plane, then the resulting points $\pi_z(p_i)$ and $\pi_z(P_i)$ are coupled by  an inversion $\iota$ followed by a rotation $\rho$ and therefore the condition (i) is fulfilled. 

Based on this example we also want to show that the condition (i) is not
sufficient for the existence of a self-motion. 
To do so, we add an extra leg $(p_{n+1}, P_{n+1})$ where $P_{n+1}$ is not the
center of the sphere holding the path of $p_{n+1}$, but another point with the
same $xy$--coordinates. Then the hypothesis of (i) in Result 1
is fulfilled, but the mobility of the new $(n+1)$--pod is zero.

\smallskip
{\bf \noindent Example ad (ii):}

On the contrary, condition (ii) is  sufficient for the existence of a self--motion. 
If the platform is located in a way that the carrier line of $p_1, \ldots, p_m$ coincides with the carrier line of $P_{m+1}, \ldots, P_n$, then the platform can rotate freely about this line. Therefore we get a 1--dimensional set of so--called {\it butterfly--motions}. 

Note that $n$--pods which fulfill the condition (ii) can also have further
self--motions beside these butterfly--motions, even if they do not possess the
property of item (i). Good examples for this fact are the three types of
Bircard's flexible octahedra \cite{bricardoct}, as they can be interpreted as
hexapods.

\medskip
\noindent
{\bf Result 2} {\it 
	Let $\Pi$ be an $n$--pod with mobility $2$ or higher. Then one of the
following holds:
	\begin{itemize}
		\item[(a)]\smallskip
			there are infinitely many pairs $(L,R)$ of elements of $S^2$ such that the
points $\pi_L(p_1), \ldots, \pi_L(p_n)$ and $\pi_R(P_1), \ldots, \pi_R(P_n)$
are equivalent by an inversion or a similarity;
		\item[(b)]
			there exists $m \leq n$ such that $p_1, \ldots, p_m$
are collinear and $P_{m+1} = \ldots = P_n$, up to permutation of indices and
interchange between base and platform;
		\item[(c)]
			there exists $m \leq n$ with $1 < m < n-1$  such that $p_1, \ldots, p_m$
lie on a line $g \subseteq \R^3$ and $p_{m+1}, \ldots, p_n$ lie on a line $g'
\subseteq \R^3$ parallel to $g$, and $P_1, \ldots, P_m$ lie on a line $G
\subseteq \R^3$ and $P_{m+1}, \ldots, P_n$ lie on a line $G' \subseteq \R^3$
parallel to $G$, up to permutation of indices.
	\end{itemize}
}

\smallskip
This last result, in particular condition (a), is the starting point of further
investigations on pentapods with mobility $2$, which are carried on in
\cite{GalletNawratilSchicho}, relying on a new technique called \emph{M\"obius
Photogrammetry}.

\section{Compactification of $\SE$}
\label{compactification}

We start our discussion in Subsection \ref{compactification:new} by introducing
a new compactification of the group of direct isometries of $\R^3$ in a
projective space. Then in Subsection \ref{compactification:action} we study the
natural action of isometries on this compactification, and we prove that it is
given by linear changes of coordinates. At last in Subsection
\ref{compactification:boundary} we describe the boundary of the
compactification.

\subsection{A new compactification}
\label{compactification:new}

We study the 6-dimensional algebraic group $\SE$ of direct isometries of affine
$3$--space $\R^3$ into itself.
One can embed $\SE$ as an open subset of a quadric hypersurface in $\p^7_{\R}$,
called the \emph{Study quadric}. This compactification of $\SE$ turns out to be
extremely useful in the study of mobility properties of objects coming from
robotics and kinematics (see for example \cite{HegeduesSchichoSchroecker},
\cite{Nawratil2014}). However, in our situation we will see that a different
compactification will lead us to a better comprehension of the phenomena which
can arise.

Any isometry of $\R^3$ can be written as a pair $(M,y)$, where $M \in \SO$ is
the linear contribution and $y \in \R^3$ is the image of the origin
$o \in \R^3$. We define $x := -M^t y = -M^{-1} y$ and $r := \langle
x,x \rangle = \langle y,y \rangle$, where $\langle \cdot, \cdot \rangle$ is the
Euclidean scalar product. The isometry $(M, y)$ is considered as a point in
$\p^{16}_{\R}$ with coordinates 
\begin{itemize}
	\item[$\cdot$] $m_{11}, \dots, m_{33}$ (the entries of the matrix),
	\item[$\cdot$] $x_1,\dots,x_3$ (the coordinates of $x$), 
	\item[$\cdot$] $y_1,\dots,y_3$ (the coordinates of $y$), 
	\item[$\cdot$] $r$ and $h$ (a homogenization coordinate; for group elements
we can assume that it is equal to $1$).
\end{itemize}
The group $\SE$ is defined by the inequality $h \ne 0$ and equations
\[ M M^t = M^t M = h^2 \cdot \mathrm{id}_{\R^3}, \;\; \det(M)=h^3, \]
\[ M^t y + h x = 0, \;\; M x + h y = 0 , \]
\[ \langle x,x \rangle = \langle y,y \rangle = r h \]
(not all equations are needed, for instance $M^t y + h x = 0$ is a consequence
of the other equations and the inequality).
We define $X_{\R}$ as the Zariski closure of $\SE$, i.e. the zero set of the set
of all equations vanishing at $\SE$.
Using computer algebra, a Gr\"obner basis for this set of equations can be
computed\footnote{This can be done, for example, by adjoining a temporary
variable $u$, computing a Gr\"obner basis of the equations above and equation
$hu-1$ with an elimination order that eliminates $u$, and then taking the subset
of the basis of elements that have degree $0$ in $u$.}. The degree of $X_{\R}$
can also be computed using computer algebra by the leading monomials of the
Gr\"obner basis: it is $40$.
\begin{remark}
	In the first version of this paper we constructed a projectively isomorphic
compactification of $\SE$ using Study parameters. However it turned out that
the construction above is more direct, computationally simpler and easier to
generalize to higher dimensions.
\end{remark}
We consider the spherical conditions $\left\| \sigma(p_i) - P_i \right\| = d_i$
we want to impose to rigid motions in $\R^3$; if we set $h = 1$, this can
be expressed by:
\begin{equation} 
\label{equation:sphere_condition}
\begin{aligned}
	d_i^2 & = \langle M p_i + y - P_i,M p_i + y - P_i \rangle \\
	& = \langle M p_i,M p_i\rangle + 2\langle M p_i, y \rangle + r +\langle
P_i,P_i\rangle
	-2\langle M p_i, P_i \rangle - 2 \langle y, P_i\rangle \\
	& = \langle p_i, p_i \rangle -\langle P_i, P_i\rangle + r + 2\langle p_i,M^t
y \rangle -2\langle M p_i, P_i \rangle - 2 \langle y, P_i \rangle \\
	& = \langle p_i, p_i\rangle -\langle P_i, P_i \rangle + r - 2\langle
p_i, x\rangle - 2 \langle y, P_i \rangle -2\langle M p_i, P_i\rangle.
\end{aligned}
\end{equation}

\begin{remark}
	After homogenization, Equation~(\ref{equation:sphere_condition}) becomes
linear in the projective coordinates of $\p^{16}_{\R}$. 
\end{remark}

\smallskip
By introducing this compactification of $\SE$ we reduced the problem of dealing
with Equation (\ref{equation:spherical_condition}) to the problem of
understanding linear equations on $X_{\R} \subseteq \p^{16}_{\R}$. In order to
fully use the techniques from algebraic geometry and to be able to set up bond
theory, we need to extend our ground field to the complex numbers.

\begin{definition}
	From now on we work with the complexification of $X_{\R}$, denoted by $X$. In
	order to do this we simply take the equations defining $X$ (which have real
	coefficients), and we think of them as polynomials over $\C$. Hence what we
	get is a projective variety in $\p^{16}_{\C}$ of complex dimension $6$ and
	degree $40$ whose real points are in bijection with the points of $X_{\R}$.
	Inside $X$ we can consider the complexification of $\SE$, which we will denote
	by $\SEC$, hence we have an injective map $\Phi: \SE \hookrightarrow \SEC
	\subseteq X$.
\end{definition}

\subsection{Action of $\SE$ on $X$}
\label{compactification:action}

In this subsection we want to extend the natural actions of $\SE$ on itself,
given by composition on the left and on the right, to actions of $\SE$ on
$\p^{16}_{\C}$ which restrict to actions on $X$. This will be useful
in the proofs of Section \ref{bond_geometric}, because it will allow us to
exploit the symmetries of $X$. 

Let $\sigma_1: v \mapsto M_1 v + y_1$ and $\sigma_2: v \mapsto M_2 v + y_2$ be
isometries. Then the product $\sigma_{12} = \sigma_1 \sigma_2$ maps 
\[ v \; \mapsto \; (M_1 M_2) v + (M_1 y_2 + y_1). \]
We set $M_{12} := M_1 M_2$ and $y_{12} := M_1 y_2 + y_1$. The remaining affine
coordinates of $\sigma_{12}$ are
\[ 
	\begin{spreadlines}{0.8em}
		\begin{aligned}
			x_{12} & = -M_{12}^t y_{12} = -M_2^t M_1^t M_1 y_2 - M_2^t M_1^t y_1 \\
			& = -M_2^{t} y_2 - M_2^t M_1^t y_1 = x_2 + M_2^t x_1, \\ 
			r_{12} & = \langle y_{12}, y_{12}\rangle = \langle y_1,y_1\rangle +
		\langle{M_1 y_2, M_1 y_2}\rangle + 2 \langle M_1 y_2, y_1\rangle \\
			& = r_1 + r_2 - 2 \langle x_1,y_2 \rangle.
		\end{aligned} 
	\end{spreadlines} 
\]
This product becomes bilinear after homogenization, and the projective coordinates of $\sigma_{12}$ are
\begin{equation} 
\label{equation:product}
	\big( \underbrace{h_1 h_2}_{h_{12}}: \underbrace{M_1
	M_2}_{M_{12}}:
	\underbrace{M_2^t x_1 + h_1 x_2}_{x_{12}}: \underbrace{h_2 y_1
	+ M_1 y_2}_{y_{12}}: \underbrace{h_2 r_1 + h_1 r_2 - 2\langle
	x_1,y_2\rangle}_{r_{12}} \big).
\end{equation}
(The matrices and vectors appearing in the above coordinates should be replaced
by their entries.) 

\begin{proposition}
\label{proposition:product}
	Let $(h_1:M_1:x_1:y_1:r_1),(h_2:M_2:x_2:y_2:r_2) \in X$. Then the above
product	is defined if at least one of $h_1$, $h_2$ is not equal to zero. 
\end{proposition}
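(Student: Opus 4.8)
The plan is to show that the product coordinates in Equation~(\ref{equation:product}) do not all vanish simultaneously whenever at least one of $h_1, h_2$ is nonzero, so that the formula genuinely defines a point of $\p^{16}_\C$. By symmetry (the two actions of $\SE$ by left and right multiplication, which will be treated in this subsection), it suffices to handle the case $h_1 \ne 0$; the case $h_2 \ne 0$ follows by an analogous argument or by passing to an opposite-multiplication version. So first I would normalize and assume $h_1 = 1$, meaning the first factor $(1:M_1:x_1:y_1:r_1)$ is an honest element of $\SEC$, hence $M_1 \in \SO$ (over $\C$) with $M_1 M_1^t = \mathrm{id}$, $\det M_1 = 1$, and the relations $x_1 = -M_1^t y_1$, $r_1 = \langle x_1, x_1\rangle$ hold.

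Next I would argue by contradiction: suppose all the homogeneous coordinates of $\sigma_{12}$ vanish. The homogenization coordinate is $h_{12} = h_1 h_2$, so $h_{12} = h_2$. The matrix block is $M_{12} = M_1 M_2$; since $M_1$ is invertible (being in $\SO$), the vanishing of every entry of $M_1 M_2$ forces $M_2 = 0$. The $x$-block gives $x_{12} = M_2^t x_1 + h_1 x_2 = x_2$ (using $M_2 = 0$ and $h_1 = 1$), so $x_2 = 0$; likewise the $y$-block gives $y_{12} = h_2 y_1 + M_1 y_2 = M_1 y_2$, and invertibility of $M_1$ forces $y_2 = 0$. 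Finally the $r$-block gives $r_{12} = h_2 r_1 + h_1 r_2 - 2\langle x_1, y_2\rangle = h_2 r_1 + r_2$, which with $y_2 = 0$ reduces to $h_2 r_1 + r_2 = 0$. At this point all of $h_2, M_2, x_2, y_2, r_2$ are forced to be zero except possibly $h_2$ and $r_2$, which satisfy $r_2 = -h_2 r_1$.

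The crux, and the step I expect to be the main obstacle, is ruling out the residual possibility that $(h_2 : M_2 : x_2 : y_2 : r_2) = (h_2 : 0 : 0 : 0 : -h_2 r_1)$ is itself a point of $X$, which would be a genuine coordinate tuple for the second factor. Here I would use the defining equations of $X$ computed earlier: the vanishing ideal of $\SEC$ contains the relations $M M^t = h^2\, \mathrm{id}$ and $\det M = h^3$. Substituting $M_2 = 0$ into $M_2 M_2^t = h_2^2\, \mathrm{id}$ yields $h_2^2\, \mathrm{id} = 0$, hence $h_2 = 0$; then $r_2 = -h_2 r_1 = 0$ as well, so the entire second tuple is the zero vector, which is not a point of $\p^{16}_\C$ and contradicts $(h_2:M_2:\cdots) \in X$. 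This confirms that the supposed simultaneous vanishing cannot occur, so the product point is well defined.

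I should check that the relation $M M^t = h^2\,\mathrm{id}$ (or at least a consequence forcing $h$ to vanish when $M$ does) indeed lies in the vanishing ideal of all of $X$, not merely of the open dense subset $\SEC$; since $X$ is by definition the Zariski closure of $\SEC$, any equation vanishing on $\SEC$ vanishes on $X$, and $M M^t = h^2\,\mathrm{id}$ was listed among the defining equations of $\SE$, so this is immediate. The only subtlety worth a sentence in the write-up is that we are working over $\C$, where $M_2 = 0$ together with $M_2 M_2^t = h_2^2\,\mathrm{id}$ still forces $h_2 = 0$ entrywise; no reality is used. Thus the argument is purely formal once the defining equations are in hand, and the symmetric case $h_2 \ne 0$ is dispatched the same way after swapping the roles of the two factors.
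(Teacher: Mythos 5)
Your proof is correct and is essentially the paper's argument viewed from the other side: the paper normalizes to the case $h_1=0$ and shows that all coordinates of the \emph{first} factor vanish, while you normalize $h_1=1$ and show that all coordinates of the \emph{second} factor vanish, both hinging on the invertibility of the rotation block of the factor with nonzero $h$ (via $\det M=h^3$ in the paper, via $MM^t=h^2\,\mathrm{id}$ for you). The only remark worth making is that your ``crux'' step invoking $M_2M_2^t=h_2^2\,\mathrm{id}$ is superfluous, since $h_2=0$ already follows from the vanishing of $h_{12}=h_1h_2$ together with $h_1\ne 0$; establishing this first also cleanly justifies your $y$-block step, which as written implicitly assumes $h_2y_1=0$ before $h_2=0$ has been derived.
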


\begin{proof}
	Assume that the product is undefined, which means that all entries are zero.
	In particular, $h_1 h_2 = 0$. We assume $h_1 = 0$ (the other case $h_2 = 0$
	can be treated analogously). Assume, indirectly, that $h_2\ne 0$. Since
	$\det(M_2) = h_2^3$, it follows that $M_2$ is invertible.
	Since $M_1 M_2 = 0$, it follows that $M_1=0$. Since $M_2^t x_1 + h_1 x_2=0$,
	it follows that $x_1 = 0$. Since $h_2 r_1 + h_1 r_2 - 2 \langle x_1,
	y_2 \rangle = 0$, it follows that $r_1 = 0$. Then all coordinates of the first
	element vanish,	a contradiction.
\end{proof}

\begin{remark}
	Since the $h$--coordinate of any element $\sigma \in \Phi(\SE)$ is always
different from zero, then Proposition \ref{proposition:product} ensures that
left and right multiplication by $\sigma$, which a priori are maps from
$\Phi(\SE)$ to itself, extend to linear maps $X \longrightarrow X$.
\end{remark}

\begin{remark}
\label{remark:action}
	We specialize Equation \eqref{equation:product} to the cases of left and
right multiplication by translations or rotations along the origin. We fix
$\sigma \in X$ and we suppose that it has coordinates $\sigma = (h: M: x: y:
r)$. 
\begin{adjustwidth}{0.4cm}{0.4cm}
\begin{itemize}
	\item[a)]
		Given a vector $s \in \R^3$, the translation by $s$ gives the following
		element	$\sigma' \in \Phi(\SE)$:
		\[ \sigma' = \big( 1: \mathrm{id}: -s: s: \langle s, s \rangle \big). \]
		Then left multiplication by $\sigma'$ provides
		\[ 
			\sigma' \tth \sigma \; = \; \big( h: M: -M^t s + x: hs + y: h \tth
			\langle s, s \rangle + r + 2 \langle s, y \rangle \big),
		\]
		while right multiplication by $\sigma'$ provides
		\[
			\sigma \tth \sigma' \; = \; \big( h: M: x - hs: y + Ms: r + h \tth \langle
			s, s \rangle - 2 \langle x, s \rangle \big).
		\]
	\item[b)]
		Given an orthogonal matrix $M' \in \SO$, the rotation around the origin by
		$M'$ gives the following element $\sigma' \in \Phi(\SE)$:
		\[ \sigma' \; = \; (1: M: 0: 0: 0). \]
		Then left multiplication by $\sigma'$ provides
		\[
			\sigma' \tth \sigma \; = \; \big( h: M'M: x: M'y: r \big)
		\]
		while right multiplication by $\sigma'$ provides
		\[
			\sigma \tth \sigma' \; = \; \big( h: M M': M'x: y: r \big)
		\]
\end{itemize}
\end{adjustwidth}
\end{remark}

\subsection{Boundary of $X$.}
\label{compactification:boundary}

\begin{definition}
The \emph{boundary} of $X$ is defined as $B := X \setminus \SEC$. It is the
closed subset of $X$ cut out by the linear equation $h = 0$. 
\end{definition}

For any point $(h: M: x: y: r) \in B$, we have 
\[ M M^t \; = \; M^t M \; = \; 0, \quad \quad M^t y \; = \; M x \; = \; 0, \]
\[ \langle x,x \rangle \; = \; \langle y,y \rangle \; = \; 0 . \]
The first equation shows that $\mathrm{rank}(M) \le 1$. Hence $M = v w^t$ for
two suitable vectors $v, w \in \C^3$. It should be noted that $v$ and $w$ are
not unique: one may multiply $v$ by a nonzero complex number and $w$ by its
inverse. We have two cases:
\begin{itemize}
	\item[i.]
		if $M = 0$, we can take both $v$ and $w$ to be zero;
	\item[ii.]
		if $M \neq 0$, then again by the same equation, it follows that
$\langle v,v \rangle = \langle w,w \rangle = 0$. The second equation implies
$\langle x,w \rangle = \langle y,v \rangle = 0$. Then the subspace spanned by
$x$ and $w$ is totally isotropic with respect to $\langle \cdot, \cdot \rangle$,
and this implies it has dimension $1$, so $x$ and $w$ are linearly dependent.
Similarly, also $y$ and $v$ are linearly dependent.
\end{itemize}

\medskip
We can partition the boundary in five subsets. The nomenclature of the various
types of points will become clear in Section \ref{bond_geometric}.

\subsubsection{Vertex.}
\label{compactification:boundary:vertex}

For any real point in $B$, we have $v = w = x = y = 0$. The only nonzero
coordinate is $r$, so we have a unique real point $v_0 = (0: \ldots : 0 : 1)$ in
$B$, called the \emph{vertex}. A computer algebra computation shows that this is
a point of multiplicity $20$ on $X$, but we do not need this fact.

\subsubsection{Inversion Points.}
\label{compactification:boundary:inversion}

Consider the matrix $N := r M + 2y x^t$ (it will be clear later in the
discussion why we choose this expression for $N$). A boundary point $\beta$ with
$M \ne 0$ and $N \ne 0$ is called an \emph{inversion point}. In this case we
have $x = \lambda w$ and $y = \mu v$ with $\lambda, \mu \in \C$. Hence the
coordinates of an inversion point can be written as $(0:v w^t:\lambda w: \mu
v:r)$. Since $v$
and $w$ satisfy the quadratic equation $\langle v,v \rangle = \langle w,w
\rangle = 0$ (called the equation of the \emph{absolute conic} in $\p^2_{\C}$),
the complex dimension of the set of inversion points is 5 (one for $v$, one for
$w$, one for $\lambda$, one for $\mu$, one for $r$). One can show that these are
smooth points of the boundary, but we do not need this fact. 

In order to compute normal forms, we first apply rotations. Multiplication from
the right  by a rotation of matrix $M'$ gives (see Remark
\ref{remark:action}) 
\[ (0: v w^t M': \lambda \tth M' w: \mu \tth v: r), \]
so it leaves $v$ fixed. Being $M'$ an orthogonal
matrix, it is in particular unitary, so it preserves both the scalar product
and the Hermitian norm of $w$, and the action is transitive on vectors with
$\langle w, w \rangle = 0$ and of the same Hermitian norm. Hence $w$ can be
taken to a vector of the form $\delta \tth (1, i, 0)^t$, where $\delta \in
\C^{*}$. Multiplication from the left acts analogously on $v$. Hence by suitable
rotations from both sides we obtain $v = \gamma \tth (1,i,0)^t$ and $w =
\delta \tth (1,i,0)^t$ with both $\gamma$ and $\delta$ different from zero
since $M \neq 0$. Then projectively we can suppose that $M = \left(
\begin{smallmatrix}
	1 & i & 0 \\
	i & -1 & 0 \\
	0 & 0 & 0 
\end{smallmatrix} 
\right)$. 

The action of left multiplication by a translation with vector $s \in \R^3$ on
the boundary point $\beta$ gives (see Remark \ref{remark:action})
\[ \big(0:vw^t:(-\langle v,s \rangle + \lambda) \tth w: \mu \tth v: r + 2\mu
\tth \langle v,s\rangle
\big) , \]
and similarly the action by right multiplication with vector $t \in \R^3$ gives
\[ \big(0:vw^t: \lambda \tth w: (\langle w,t \rangle + \mu) \tth v: r - 2\lambda
\tth \langle w,t \rangle \big) . \]
This shows that we can achieve $\lambda = \mu = 0$ by multiplication by
translations from both sides (for example, since we reduced to the situation $v
= (1, i,0)^t$, one can take $s_1 = \mathrm{Re} \lambda$, $s_2 = \mathrm{Im}
\lambda$ and $s_3$ to be arbitrary, where $s = (s_1, s_2, s_3)^{t}$, and
similarly for $t$). It also shows that the matrix $N$ is invariant under
translations (this was the reason why we chose $N$ in this way). So by
translations from both sides, we obtain $x = y = 0$. The value of $r$ cannot be
changed by any rotation that fixes $x = y = 0$, but we still can apply a
rotation of the form $\left( 
\begin{smallmatrix}
	c & d & 0 \\
	-d & c & 0 \\
	0 & 0 & 1 
\end{smallmatrix} 
\right)$, with $c^2 + d^2 = 1$, from the left. The effect on $M$ is
multiplication by $(c+id)$, and we have no effect on $r$. Projectively, this
is the same as leaving $M$ untouched and multiplying $r$ by $(c+id)^{-1}$.
Hence we can reach the situation where $r \in \R_{>0}$. We notice that $r$
cannot be zero, otherwise we would have $N = 0$. So inversion points have the
following normal forms:
\[ \beta \; = \; (0: \underbrace{1:i:0:i:-1:0:0:0:0}_M:
\underbrace{0:0:0}_{x}: \underbrace{0:0:0}_{y}: r), \]
with $r\in\R_{>0}$.

\subsubsection{Butterfly Points.}
\label{compactification:boundary:butterfly}

A boundary point $\beta$ with $M \ne 0$ and $N = 0$ is called a \emph{butterfly
point}. The complex dimension of the set of butterfly points is $4$: as before,
we can choose $v$ and $w$ on the absolute conic curve, and $\lambda, \mu \in
\C^\ast$.
The normal form is constructed similarly as above. In this case, when we obtain
$x = y = 0$, the fact that $M \neq 0$ and $N = 0$ forces $r$ to be zero. In this
case, we have only a single normal form, namely 
\[ \beta = (0: \underbrace{1:i:0:i:-1:0:0:0:0}_M: \underbrace{0:0:0}_{x}:
\underbrace{0:0:0}_{y}: 0). \]

\subsubsection{Similarity Points.}
\label{compactification:boundary:similarity}

The points $\beta = (0: M: x: y: r) \in B$ such that $M = 0$, $x \ne 0$ and $y
\ne 0$ are called \emph{similarity points}. Since $x$ and $y$ are on the
absolute conic, the complex dimension of the set of similarity points is $4$.

\smallskip
To compute normal forms of similarity points, we first apply rotations. As
we saw in Subsection \ref{compactification:boundary:inversion}, right
multiplication fixes $y$ and $r$ and can transform $x$ to $\gamma
\tth (1,i,0)^t$, and left multiplication fixes $x$ and $r$ and can transform
$y$ to $\delta \tth (1, i, 0)^t$, with both $\gamma$ and $\delta$ in $\C^{*}$.
Hence projectively we can always suppose that $\delta = 1$, so we can
reduce any similarity point to one such that $x = \gamma \tth (1,i,0)^t$ and $y
=
(1,i,0)^t$. Then translations act transitively on $r$, thus we may
get to the situation with $r = 0$. Eventually, as we have already seen in
Subsection \ref{compactification:boundary:inversion}, we can perform rotations
so that we can ensure that $\gamma$ is a real positive number. So we get normal
forms of the following kind
\[ \beta = (0: \underbrace{0:0:0:0:0:0:0:0:0}_M:
\underbrace{\gamma:i\gamma:0}_{x}: \underbrace{1:i:0}_{y}: 0), \]
with $\gamma \in \R_{>0}$.

\subsubsection{Collinearity Points.}
\label{compactification:boundary:collinearity}

For the last group of points $\beta$ in $B$ we have $M = 0$ and either $x = 0$,
$y \ne 0$ or $x \ne 0$, $y = 0$. These points are called \emph{collinearity
points}. There are two subsets of collinearity points, one with $x = 0$ and one
with $y = 0$. Both subsets have complex dimension $2$ (since there is still a
free value for $r$ to choose). 

By rotations, we can achieve either $x = (1:i:0)^t$ or $y = (1:i:0)^t$.
Translations act transitively on $r$, so we get two normal forms, namely
\[ \beta = (0: \underbrace{0:0:0:0:0:0:0:0:0}_M: \underbrace{1:i:0}_{x}:
\underbrace{0:0:0}_{y}:0) , \]
\[ \beta = (0: \underbrace{0:0:0:0:0:0:0:0:0}_M: \underbrace{0:0:0}_{x}:
\underbrace{1:i:0}_{y}: 0). \]

\bigskip
We conclude the section about the boundary of $X$ by showing that we can
associate to each inversion, butterfly and similarity point a pair $(L,R)$ of
elements of $S^2$, namely oriented directions in $\R^3$. This piece of
information will play an important role in the main results of Section
\ref{bond_geometric}. 

\smallskip
We recall that in the case of inversion and butterfly points the matrix $M$ is
of rank $1$, since it is non zero and has rank $\leq 1$, as implied by the
boundary condition $h = 0$. Hence $M$ is of the form $v w^t$ for two non zero
vectors whose coordinates satisfy 
\[ v_1^2 + v_2^2 + v_3^2 \; = \; 0 \quad \mathrm{and} \quad w_1^2 + w_2^2 +
w_3^2 \; = \; 0 \]
Then we can think of $v$ and $w$ as points of the conic $C = \big\{ \alpha^2 +
\beta^2 + \gamma^2 = 0 \big\}$ in $\p^2_{\C}$. Notice that, although $v$ and $w$
are not unique, they always give the same pair of points on the conic. We would
like to think of $v$ and $w$ as directions in $\R^3$, and in order to do this we
provide an identification between $C$ and $S^2$. This identification is
accomplished in two steps, namely first we consider an isomorphism $C \cong
\p^1_{\C}$ and then we take the bijection between $\p^1_{\C}$ and $S^2$ given by
the stereographic projection\footnote{These identifications are very special
ones, since they become isomorphisms of real varieties when we consider,
respectively, componentwise complex conjugation on $C$, the map $(s,t) \mapsto
(-\overline{t}, \overline{s})$ on $\p^1_{\C}$ and the antipodal map on $S^2$
as real structures. This can be understood as a hint why these particular
choices work well, but we do not use this property in our investigations.}. \\
The isomorphism $C \cong \p^1_{\C}$ is given by the parametrization
\[ \p^1_{\C} \ni (s,t) \; \mapsto \; \big( (s^2 - t^2): i(s^2 + t^2): 2st
\big) \in C \]
and its inverse
\[
	\left\{ 
	\begin{array}{lcll}
		(\alpha: \beta : \gamma) & \mapsto & (\alpha - i \beta : \gamma) &
\mathrm{if\ } (i\alpha +
\beta, \gamma) \neq (0,0) \\
		(\alpha: \beta : \gamma) & \mapsto & (\gamma: - \alpha - i \beta) &
\mathrm{otherwise}
	\end{array}
	\right.
\]
The identification between $\p^1_{\C}$ and $S^2$ by stereographic projection
is provided by the following equations:
\[
\begin{array}{l}
	\left\{
	\begin{array}{lcll}
		(0,0,1) & \mapsto & (0:1) \in \p^1_{\C} \\ [2mm]
		(\lambda, \mu, \nu) & \mapsto & \left( 1: \frac{\lambda + i \mu}{1 - \nu}
\right) \in \p^1_{\C} & \mathrm{for\ all\ } (\lambda, \mu, \nu) \in S^2
\setminus \big\{ (0,0,1) \big\}
	\end{array}
	\right. \\ [8mm]
	\left\{
	\begin{array}{lcll}
		(0:1) & \mapsto & (0,0,1) \in S^2 \\ [2mm]
		(1: a + ib) & \mapsto & \left( \frac{2a}{a^2 + b^2 + 1}, \frac{2b}{a^2 +
b^2 + 1}, \frac{a^2 + b^2 - 1}{a^2 + b^2 + 1}  \right) & \mathrm{for\ all\ } a,b
\in \R
	\end{array}
	\right.
\end{array}
\]
For example if $v = (1:i:0) \in C$, then the corresponding element
of $S^2$ is the South pole $(0,0,-1)$.

\smallskip
In this way it is possible to assign to each inversion or butterfly point a
pair $(L,R)$ of elements in $S^2$. We would like to do the same for similarity
points. There is a glaring obstruction in doing this, namely the fact that for
similarity points both the $h$ and the $m_{ij}$--coordinates are zero. On the
other hand for all boundary points the two matrices $M$ and $xy^t$ are linear
dependent, and in the case of similarity points $x y^t$ is different from zero.
Moreover $x$ and $y$ satisfy $\langle x, x \rangle = \langle y, y \rangle = 0$.
So we can associate to a similarity point the pair of elements of $S^2$ coming
from the vectors $x$ and $y$. 

\begin{definition}
\label{definition:left_right}
	Via these identifications we can associate to every inversion, butterfly or
similarity point $\beta$ in $B$ a pair $(L,R)$ of elements of $S^2$,
which are respectively called the \emph{left} and the \emph{right vector} of
$\beta$.
\end{definition}

\section{Geometric Interpretation of Bonds}
\label{bond_geometric}

This section represents an instance of a more general technique
called \emph{bond theory}: the goal is to extract information from boundary
points which arise as limits of self--motions of an $n$--pod. Boundary points do
not represent direct isometries of $\R^3$, but nevertheless we can give them
geometric meaning, since their presence as limits of self--motion determine
geometric conditions the base and platform points have to satisfy.

\medskip
Recall from Section \ref{compactification} that $\Phi(\SE)$ is an embedding of
$\SE$ in $\p^{16}_{\C}$, that we denoted by $\SEC$ its complexification and
that we defined $X$ as the Zariski closure of $\SEC$. Moreover, recall that we
think of an $n$--pod as a triple
		\[ \Pi = \Big( (p_1, \dots, p_n),(P_1, \dots, P_n),(d_1, \dots, d_n) \Big)
\]
Eventually, recall that in the new coordinates of $\p^{16}_{\C}$ the spherical
condition given by Equation \eqref{equation:spherical_condition} reads as
\begin{equation}
\label{equation:leg_condition}
	d_i^2 \tth h \; = \; \big( \langle p_i, p_i\rangle \tth - \langle P_i, P_i
\rangle \big) \tth h + r - 2 \langle p_i, x\rangle - 2 \langle y, P_i \rangle
-2 \langle M p_i, P_i \rangle,
\end{equation}
which gives a linear form $l_i$ on $\p^{16}_{\C}$.

\begin{remark}
	Let $\Pi$ be an $n$--pod, then the real points of the intersection
	\[ \SEC \cap \big\{ l_i \, = \, 0 \big\} \]
	are in bijective correspondence with the set of all $\sigma \in \SE$ such that
the distance between $\sigma(p_i)$ and $P_i$ is $d_i$, namely the set of direct
isometries satisfying the spherical condition for $p_i$ and $P_i$.
\end{remark}

\begin{definition}
	Let $\Pi$ be an $n$--pod, then the intersection of $X$ with the hyperplanes
defined by $\big\{ l_i = 0 \big\}$ for $i \in \{ 1, \ldots, n \}$ is called
the \emph{complex configuration set} of $\Pi$ and denoted by $K_\Pi$; the real
points of this intersection are called the \emph{real configuration set} of
$\Pi$.
The complex dimension of $K_\Pi \cap \SEC$ as a complex algebraic variety is
called the \emph{mobility} of $\Pi$. If the mobility of $\Pi$ is greater than or
equal to $1$, then $\Pi$ is said to be \emph{mobile}.
\end{definition}

\begin{remark}\label{rem:cardinality}
	For a generic hexapod, the complex configuration set is finite of
cardinality~$40$. This has been shown by \cite{RongaVust}. It also follows from
the fact $\deg(X) = 40$ that was mentioned in Subsection
\ref{compactification:new}: in fact in the coordinates of $\p^{16}_{\C}$
the spherical equation \eqref{equation:sphere_condition} becomes linear, hence
every leg of an $n$--pod imposes a linear condition on $X$. For a
generic hexapod $\Pi$, its complex configuration set $K_{\Pi}$ is given by
the intersection of $6$ generic hyperplanes in $\p^{16}_{\C}$ with $X$. The
intersection of the hyperplanes gives a generic codimension $6$ linear space
$H_{\Pi}$. Now we use the following general fact from projective geometry: the
intersection of a complex projective variety of dimension $r$ and degree $d$
with a generic linear space of codimension $r$ consists of $d$ points. Since $X$
has dimension $6$, its intersection with $H_{\Pi}$, namely $K_{\Pi}$, is given
by a finite number of points whose cardinality equals the degree of $X$. Hence
in the generic case $K_{\Pi}$ is constituted by $40$ points. 
\end{remark}

\begin{definition}
	Let $\Pi$ be an $n$--pod, we define its set of \emph{bonds} $B_{\Pi}$ as the
intersection of $K_{\Pi}$ and the hyperplane $\big\{ h = 0 \big\}$, namely
$B_{\Pi}$ is the intersection of $K_{\Pi}$ with the boundary $B$ of $X$, as
defined in Subsection \ref{compactification:boundary}.
\end{definition}

\begin{remark}
	For bonds, Equation \eqref{equation:leg_condition} reduces to
	\begin{equation}
	\label{equation:leg_condition_bonds}
		r - 2 \langle p_i, x\rangle - 2 \langle y, P_i \rangle
-2 \langle M p_i, P_i \rangle \; = \; 0.
	\end{equation}
\end{remark}

\begin{definition}
	We call the condition imposed by Equation
\eqref{equation:leg_condition_bonds} the \emph{pseudo spherical condition} for
the points $(p_i, P_i)$ at the bond $(0:M:x:y:r)$.
\end{definition}

\begin{remark}
\label{remark:vertex_no_bond}
	Recall that the vertex $v_0$ is the only real point of $B$ (see Subsection
\ref{compactification:boundary:vertex}). Since $v_0$ can never be a bond of an
$n$--pod (in fact, by instantiating $v_0$ in Equation
\eqref{equation:leg_condition_bonds} we would get the contradiction $1 = 0$),
then $B_{\Pi}$ has no real points.
\end{remark}

\begin{remark}
\label{remark:mobility_bonds}
	If an $n$--pod $\Pi$ is mobile, then by definition $\dim K_{\Pi} \cap \SEC
\geq 1$, so $\dim K_{\Pi} \geq 1$. Since $B_{\Pi}$ is an hyperplane section of
$K_{\Pi}$, it follows that the dimension decreases at most by $1$, so $B_{\Pi}$
is not empty. By the same argument we have that if the mobility is greater
than, or equal to $2$, then $\Pi$ admits infinitely many bonds.
\end{remark}

Before coming to the main results of this section, recall that at the end of
Subsection \ref{compactification:boundary} we associated to each inversion,
butterfly and similarity points a pair of directions in $S^2$, called the left
and right vector of the boundary point.

\begin{definition}
	Given a unit vector $\epsilon \in S^2$, we denote by $\pi_{\epsilon}: \R^3
\longrightarrow \R^2$ the \emph{orthogonal projection along $\epsilon$}, namely
for every $p = (a,b,c) \in \R^3$, the point $\pi_{\varepsilon}(p)$ is the
orthogonal projection of $p$ on the affine plane orthogonal to
$\varepsilon$, passing through the origin.
\end{definition}

\smallskip
We are ready to state and prove the main results of this section.
\begin{theorem}
\label{theorem:inversione-similarity}
	There is a one-to-one correspondence between inversion/simi-larity points
	$\beta$ with both left and right vectors $L$ and $R$ equal to the South pole
	$(0,0,-1) \in S^2$ and inversions/similarities $\kappa$ of the plane such that
	for any pair of points $(p,P)$ in $\R^3$ the pseudo spherical condition for
	$(p,P)$ at $\beta$ is equivalent to the fact that $\kappa(q) = Q$ where $q 	=
	\pi_L(p)$ and	$Q = \pi_R(P)$. 
\end{theorem}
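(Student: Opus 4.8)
The plan is to establish the bijection by reducing a general such $\beta$ to explicit coordinates and substituting into the pseudo spherical condition~\eqref{equation:leg_condition_bonds}. First I would invoke the boundary analysis of Subsections~\ref{compactification:boundary:inversion} and~\ref{compactification:boundary:similarity}: imposing that both the left and right vectors equal the South pole forces, through the identification of the conic $C$ with $S^2$, the underlying isotropic vectors to lie along $(1,i,0)^t$. Hence an inversion point has the form $\big(0:M:s\,(1,i,0)^t:t\,(1,i,0)^t:r\big)$ with $M=(1,i,0)^t(1,i,0)$, where we have fixed the projective scale of $M$, while a similarity point has the form $\big(0:0:s\,(1,i,0)^t:t\,(1,i,0)^t:r\big)$ with $s,t\neq 0$; in each case $r$ is free. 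Since $\pi_L$ and $\pi_R$ both project onto the plane orthogonal to the South pole, namely the $xy$-plane, I would identify this plane with $\C$ through $(a,b)\mapsto a+ib$, so that $q=\pi_L(p)$ and $Q=\pi_R(P)$ acquire the complex coordinates $z=a+ib$ and $Z=A+iB$; equivalently, $z$ and $Z$ are the pairings $\langle p,(1,i,0)^t\rangle$ and $\langle (1,i,0)^t,P\rangle$ with the isotropic vector attached to the South pole.

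The substitution is then short. As $M=(1,i,0)^t(1,i,0)$ has rank one, $Mp=(a+ib)(1,i,0)^t=z\,(1,i,0)^t$, and because $\langle\cdot,\cdot\rangle$ is the bilinear (not Hermitian) extension of the Euclidean form, $\langle Mp,P\rangle=zZ$; similarly $\langle p,x\rangle=sz$ and $\langle y,P\rangle=tZ$. For an inversion point the condition $r-2\langle p,x\rangle-2\langle y,P\rangle-2\langle Mp,P\rangle=0$ thus becomes $r-2sz-2tZ-2zZ=0$, which, setting $w_0:=st+r/2$, factors as
\[ (z+t)(Z+s)=w_0 . \]
For a similarity point the matrix term is absent and the condition collapses to $sz+tZ=r/2$.

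I would then read off $\kappa$ and verify bijectivity. A similarity point yields the affine map $Z=-(s/t)\,z+r/(2t)$, a similarity $\kappa:z\mapsto\alpha z+\beta$ with $\alpha=-s/t\in\C^{*}$ and $\beta=r/(2t)$; since $[s:t:r]=[-\alpha:1:2\beta]$ is recovered uniquely, this is a bijection onto such similarities. For an inversion point a direct calculation gives $N=rM+2yx^t=2w_0\,(1,i,0)^t(1,i,0)$, so the defining inequality $N\neq 0$ is precisely $w_0\neq 0$; the relation $(z+t)(Z+s)=w_0$ then describes the M\"obius map $\kappa:z\mapsto(-sz+r/2)/(z+t)$, which is invertible exactly because $w_0\neq 0$ and which is the inversion $\kappa$ attached to $\beta$. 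Normalising the pole coefficient to $1$ and comparing coefficients recovers $(s,t,r)$ uniquely, so this too is a bijection; moreover inversion points produce maps with a finite pole while similarity points produce affine maps, so the two families do not overlap. In every case the asserted equivalence between the pseudo spherical condition and $\kappa(q)=Q$ is read off directly from the factored form, under the standard convention that the pole of $\kappa$ corresponds to the point at infinity of the plane.

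The step demanding the most care is fixing the two identifications coherently---the conic-to-$S^2$ map that sends the South pole to $(1,i,0)^t$, and the induced identification of the projected plane with $\C$. Because the Euclidean form is extended bilinearly rather than Hermitianly, these produce the holomorphic coordinates $z=a+ib$ and $Z=A+iB$ with no relative conjugation, and it is exactly this that makes $\kappa$ a genuine similarity or inversion rather than an anti-holomorphic map. Once that bookkeeping is settled, only the short substitution above and the matching of parameters remain.
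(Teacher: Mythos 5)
Your proof is correct, and it reaches the statement by a noticeably different route than the paper. The paper first computes the correspondence only for the two normal forms (where $x=y=0$ for inversion points, resp.\ $\gamma\in\R_{>0}$, $r=0$ for similarity points), obtaining the equations $aA-bB=r/2$, $bA+aB=0$, resp.\ $A=-\gamma a$, $B=-\gamma b$, and then transports the result to an arbitrary $\beta$ with $L=R=(0,0,-1)$ by reducing $\beta$ to normal form via isometries $\sigma_1,\sigma_2$ fixing the two vectors and setting $\kappa=\tau_1\kappa_0\tau_2$ with the induced planar isometries $\tau_i$; the verification that the conjugated map still matches the pseudo spherical condition is left as ``one can check''. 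You instead parametrize the whole stratum at once as $\big(0:M:s\,(1,i,0)^t:t\,(1,i,0)^t:r\big)$, pass to the complex coordinates $z=a+ib$, $Z=A+iB$, and reduce the pseudo spherical condition to the single identity $(z+t)(Z+s)=st+r/2$ (resp.\ $sz+tZ=r/2$), from which both the explicit formula for $\kappa$ and the bijectivity of $\beta\leftrightarrow\kappa$ are read off by matching coefficients. This buys an explicit global formula, a transparent injectivity/surjectivity check, and the clean identity $N=2(st+r/2)\,M$, which shows that the defining inequality $N\neq 0$ of inversion points is exactly the invertibility of the associated M\"obius map --- a point the paper leaves implicit. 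Two small things to make explicit if you write this up: first, since the paper only pins down ``inversion of the plane'' through its normal form, you should record (a one-line determinant computation) that your class --- M\"obius maps with a finite pole and nonzero determinant --- coincides with the paper's class of normal-form inversions composed on both sides with planar direct isometries; second, like the paper, you implicitly assume that every tuple of the stated form is genuinely a point of $X$, i.e.\ lies in the Zariski closure of $\SEC$ rather than merely in the locus cut out by the listed boundary equations.
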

\begin{proof}
	We first treat the case of inversion points. Suppose that $\beta_{0} \in B$ is
an inversion point with $L = R = (0,0,-1)$. Suppose furthermore that $\beta_{0}$
is in the normal form (see Subsection
\ref{compactification:boundary:inversion}):
	\[ \beta_{0} \; = \; (0:\underbrace{1:i:0:i:-1:0:0:0:0}_M:
\underbrace{0:0:0}_{x}: \underbrace{0:0:0}_{y}: r), \]
	with $r \in \R_{>0}$. We get that $\pi_{L} = \pi_{R}$ are the projection on
the first two coordinates. Thus if $p = (a, b, c)$ and $P = (A, B, C)$, then
$q = (a, b)$ and $Q = (A, B)$. If we instantiate the pseudo spherical
condition for $(p, P)$ at $\beta_0$ given by Equation
(\ref{equation:leg_condition_bonds}) we get the relations:
	\begin{equation}
	\label{equation:inversion}
		\left\{
		\begin{array}{l} 
			a A - b B = \nicefrac{r}{2} \\
			b A + a B = 0
	  \end{array}
		\right.
	\end{equation}
	which define an inversion $\kappa_0$ mapping $q$ to $Q$. Conversely, suppose
we are given an inversion $\kappa_0$ described by Equation
\eqref{equation:inversion}.
Then going backwards in the previous argument we can see that we obtain an
inversion point in normal form as in the thesis. \\
	Suppose now that the $\beta \in B$ is an inversion point with $L = R =
(0,0,-1)$, but not necessarily in normal form. Then, as we saw in
Subsection \ref{compactification:boundary:inversion}, we can find two isometries
$\sigma_1, \sigma_2 \in \SE$ which fix left and right vectors such that
$\sigma_1 \beta \sigma_2 = \beta_0$ is in normal form (here $\sigma_1 \beta
\sigma_2$ denotes the element of $X$ obtained by left action by $\sigma_1$ on
$\beta$ and then by right action of $\sigma_2$). Moreover $\sigma_1$ and
$\sigma_2$ induce isometries $\tau_1$ and $\tau_2$ of $\R^2$ such that the
following two diagrams commute:
\[ \xymatrix{\R^3 \ar[r]^{\sigma_1} \ar[d]_{\pi_L} & \R^3 \ar[d]^{\pi_L} \\ \R^2
\ar[r]^{\tau_1} & \R^2 } \qquad \qquad \xymatrix{\R^3 \ar[r]^{\sigma_2}
\ar[d]_{\pi_R} & \R^3 \ar[d]^{\pi_R} \\ \R^2 \ar[r]^{\tau_2} & \R^2 } \]

If $\kappa_0$ is the inversion associated to $\beta_0$, then we define $\kappa =
\tau_1 \kappa_0 \tau_2$, and one can check that the association $\beta
\leftrightarrow \kappa$ satisfies the requirements of the thesis.

	\smallskip
	We consider now the case of similarity points. Suppose that $\beta_0 \in B$
is a similarity point with $L = R = (0,0,-1)$. Suppose furthermore that
$\beta_{0}$ is in the normal form (see Subsection
\ref{compactification:boundary:similarity}):
	\[ \beta = (0: \underbrace{0:0:0:0:0:0:0:0:0}_M:
	\underbrace{\gamma:i\gamma:0}_{x}: \underbrace{1:i:0}_{y}: 0), \]
	with $\gamma \in \R_{>0}$. Again for this kind of points $\pi_L$ and $\pi_R$
are both the projection on the first two coordinates. Performing analogous
computations as before we get the relations:
	\begin{equation}
	\label{equation:similarity}
		\left\{
		\begin{array}{l}
			A = -\gamma \tth a  \\
			B = -\gamma \tth b
	  \end{array}
		\right.
	\end{equation}
	These define a similarity $\kappa_0$ mapping $q$ to $Q$. Conversely, and in
the general case of points not in normal form, we argue as for inversion points.
\end{proof}
\begin{remark}
	As pointed out in Subsection \ref{compactification:boundary:inversion}, the
complex dimension of the set of inversion points is $5$. Theorem
\ref{theorem:inversione-similarity} allows, as remarked in the Introduction, to
associate to it a real dimension, which can be computed as follows: $2$ degrees
of freedom for choosing the vector $L$ and the same for $R$, and $6$ degrees of
freedom for specifying an inversion from $\R^2$ to itself. So, in total, we get
$10$. We can argue analogously for similarity points: here the real dimension
is $8$.
\end{remark}
\begin{cor} 
\label{cor:inversion-similarity}
	Assume that $\beta \in B_{\Pi}$ is an inversion/similarity bond of $\Pi$.
	Let $L,R \in S^2$ be the left and right vector of $\beta$.
	For $i = 1, \ldots, n$, set $q_i = \pi_L(p_i)$ and $Q_i = \pi_R(P_i)$. 
	Then there is an inversion/similarity of $\R^2$ mapping $q_1, \ldots, q_n$ to $Q_1, \ldots, Q_n$.

	Conversely, let $L,R \in S^2$ be two unit vectors such that the images of
$(p_1, \dots, p_n)$ under $\pi_L$ and of $(P_1, \dots, P_n)$ under $\pi_R$
differ by an inversion/ similarity. 
	Then $\Pi$ has an inversion/similarity bond with left vector $L$ and right
vector $R$.
	\end{cor}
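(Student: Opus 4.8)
The plan is to reduce the statement about all $n$ legs to the one-leg correspondence already established in Theorem \ref{theorem:inversione-similarity}, and then to promote that correspondence into a genuine map of point configurations. First I would recall that $\beta \in B_{\Pi}$ means precisely that $\beta$ lies on every hyperplane $\{l_i = 0\}$, so by definition the pseudo spherical condition given by Equation \eqref{equation:leg_condition_bonds} holds for each pair $(p_i, P_i)$ simultaneously at the single boundary point $\beta$. Since $\beta$ is an inversion (respectively similarity) point, Theorem \ref{theorem:inversione-similarity} produces one fixed inversion (respectively similarity) $\kappa$ of $\R^2$ — depending only on $\beta$, not on the leg — such that the pseudo spherical condition for $(p,P)$ at $\beta$ is equivalent to $\kappa(\pi_L(p)) = \pi_R(P)$. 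Applying this leg by leg with $q_i = \pi_L(p_i)$ and $Q_i = \pi_R(P_i)$, and using that the $L$ and $R$ of $\beta$ are common to all legs, I conclude $\kappa(q_i) = Q_i$ for every $i \in \{1, \dots, n\}$, which is exactly the asserted map of $\R^2$.

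For the converse I would run the equivalence in the opposite direction. Suppose $L, R \in S^2$ are given together with an inversion or similarity $\kappa$ of $\R^2$ satisfying $\kappa(\pi_L(p_i)) = \pi_R(P_i)$ for all $i$. By Theorem \ref{theorem:inversione-similarity}, the map $\kappa$ together with the data $(L,R)$ corresponds to a unique inversion/similarity boundary point $\beta \in B$ whose left and right vectors are $L$ and $R$ and for which the pseudo spherical condition at $\beta$ is equivalent to the relation $\kappa(\pi_L(p)) = \pi_R(P)$. Reading each hypothesis $\kappa(q_i) = Q_i$ through this equivalence shows that $\beta$ satisfies Equation \eqref{equation:leg_condition_bonds} for every leg; hence $\beta \in \{l_i = 0\}$ for all $i$, so $\beta \in K_{\Pi}$. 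Since $\beta$ has $h = 0$, it lies in $B = X \cap \{h = 0\}$, and therefore $\beta \in B_{\Pi} = K_{\Pi} \cap B$ is an inversion/similarity bond of $\Pi$ with the prescribed left and right vectors.

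The only point requiring care — and the step I would treat as the main obstacle — is the reduction to normal form. Theorem \ref{theorem:inversione-similarity} is phrased for $L = R = (0,0,-1)$, but here $L$ and $R$ are arbitrary elements of $S^2$. For the forward direction this is harmless, since the theorem's statement is already intrinsic: the inversion/similarity $\kappa$ it produces is defined for a general $\beta$ via the conjugation $\kappa = \tau_1 \kappa_0 \tau_2$ by the induced planar isometries, and the commuting diagrams in its proof guarantee that the equivalence transports correctly along $\pi_L$ and $\pi_R$. For the converse, however, I must make sure that the boundary point $\beta$ reconstructed from $(\kappa, L, R)$ genuinely has $L$ and $R$ as its left and right vectors and not merely a rotated pair; this is exactly what the bijectivity asserted in Theorem \ref{theorem:inversione-similarity}, combined with the rotational freedom used to place any prescribed $L, R$ at the South pole, secures. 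Once one trusts that the identifications $C \cong \p^1_\C \cong S^2$ and the action formulas of Remark \ref{remark:action} behave equivariantly — which is precisely what the normal-form computations of Subsection \ref{compactification:boundary:inversion} establish — the corollary follows without further computation.
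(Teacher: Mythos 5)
Your proposal is correct and follows the same route as the paper: the paper's proof is exactly the one-line reduction ``apply suitable isometries to put $\beta$ in normal form, then invoke Theorem \ref{theorem:inversione-similarity}'', and your leg-by-leg application of the pseudo spherical condition plus the reverse reading for the converse is the intended unpacking of that reduction. Your explicit attention to transporting general $(L,R)$ to the South pole via the rotation action and the commuting diagrams is precisely the ``suitable isometries'' step the paper leaves implicit.
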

\begin{proof}
	In both cases of inversion and similarity points we can apply suitable
isometries in order to put $\beta$ in normal form. Then it is enough to apply
Theorem \ref{theorem:inversione-similarity}.
\end{proof}

\begin{theorem}
\label{theorem:butterfly}
	There is a one-to-one correspondence between butterfly points $\beta$ and
	pairs $(g_L, g_R)$ of oriented lines in $\R^3$ such that for any pair of
	points $(p,P)$ in $\R^3$ the pseudo spherical condition for $(p,P)$ at $\beta$
	is equivalent to the fact that $p \in g_L$ or $P \in g_R$.
\end{theorem}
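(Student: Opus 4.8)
The plan is to follow the same two-stage strategy as in Theorem~\ref{theorem:inversione-similarity}: treat a butterfly point in normal form by a direct computation, then handle the general case by reduction. First I would take $\beta_0$ to be the (unique) butterfly normal form of Subsection~\ref{compactification:boundary:butterfly}, where $x=y=0$ and $r=0$. Instantiating the pseudo spherical condition \eqref{equation:leg_condition_bonds} at $\beta_0$ collapses it to $\langle M p,P\rangle=0$, and with $M=\left(\begin{smallmatrix}1&i&0\\ i&-1&0\\ 0&0&0\end{smallmatrix}\right)$ and $p=(a,b,c)$, $P=(A,B,C)$ this reads $(a+ib)(A+iB)=0$. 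The product vanishes exactly when $(a,b)=(0,0)$ or $(A,B)=(0,0)$, i.e.\ when $p$ lies on the $z$--axis or $P$ lies on the $z$--axis. Hence for the normal form both oriented lines $g_L$ and $g_R$ are the $z$--axis, oriented according to the left and right vectors of $\beta_0$ via the identification $C\cong S^2$ of Subsection~\ref{compactification:boundary}; this settles the normal-form case.

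For a general butterfly point $\beta=(0:vw^t:\lambda w:\mu v:r)$ with $r=-2\lambda\mu$ (the relation forced by $N=0$), I would substitute into \eqref{equation:leg_condition_bonds} using $Mp=\langle w,p\rangle\,v$, so that $\langle Mp,P\rangle=\langle w,p\rangle\langle v,P\rangle$, together with $\langle p,x\rangle=\lambda\langle w,p\rangle$ and $\langle y,P\rangle=\mu\langle v,P\rangle$. A short computation then shows that the pseudo spherical condition factors as
\[ \big(\langle w,p\rangle+\mu\big)\big(\langle v,P\rangle+\lambda\big)\;=\;0. \]
Each factor is a single $\C$--linear equation evaluated at a real point of $\R^3$; since $v$ and $w$ lie on the absolute conic, the real and imaginary parts of each are orthogonal and of equal length, so the two real equations coming from one factor cut out a genuine line. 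I would therefore set $g_L:=\{p:\langle w,p\rangle=-\mu\}$ and $g_R:=\{P:\langle v,P\rangle=-\lambda\}$, with directions $\mathrm{Re}(w)\times\mathrm{Im}(w)$ and $\mathrm{Re}(v)\times\mathrm{Im}(v)$ and orientations read off through the $C\cong S^2$ identification. By the factorization, the pseudo spherical condition at $\beta$ is then equivalent to $p\in g_L$ or $P\in g_R$.

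It remains to check that $\beta\mapsto(g_L,g_R)$ is a bijection. I would first verify well-definedness against the scaling ambiguity $(v,w,\lambda,\mu)\mapsto(tv,t^{-1}w,t\lambda,t^{-1}\mu)$, $t\in\C^\ast$, which fixes $M,x,y,r$ and hence $\beta$: under it the equation $\langle w,p\rangle=-\mu$ is merely rescaled, so $g_L$ (and likewise $g_R$) is unchanged, and the induced orientation survives because $\mathrm{Re}(w)\times\mathrm{Im}(w)$ only gets multiplied by a positive real factor. For the inverse, given $(g_L,g_R)$ I would recover $[w]\in C$ from the oriented direction of $g_L$ and the constant $\mu$ from its position, and symmetrically $[v]$ and $\lambda$ from $g_R$; the residual scale ambiguities in $v,w$ are exactly the gauge above, so $\beta=(0:vw^t:\lambda w:\mu v:-2\lambda\mu)$ is well defined and is a genuine butterfly point, as $M=vw^t\neq 0$ and $N=(r+2\lambda\mu)M=0$. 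This inverts the forward map. Alternatively one can argue as in Theorem~\ref{theorem:inversione-similarity}: both butterfly points and pairs of oriented lines form a single orbit under the left/right action of $\SE$, so it suffices to match the normal forms equivariantly.

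The step I expect to be the main obstacle is the orientation bookkeeping: I must confirm that the orientation inherited by $g_L$ from $\mathrm{Re}(w)\times\mathrm{Im}(w)$ is precisely the vector attached to $\beta$ in Definition~\ref{definition:left_right} (and correspondingly for $g_R$), which forces me to keep track of the antipodal-map subtlety flagged in the footnote of Subsection~\ref{compactification:boundary} and of which of $v,w$ carries the left versus the right label. Getting this sign right is exactly what makes the correspondence land on \emph{oriented} lines rather than merely unoriented ones, and it is the only place where the special nature of the $C\cong\p^1_{\C}\cong S^2$ identifications is genuinely used; the reality of the lines and the gauge-invariance checks, though essential, are routine once the factorization above is in hand.
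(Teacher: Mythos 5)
Your proposal is correct, and its handling of the general case genuinely differs from the paper's. The paper does the normal-form computation (identical to yours, with the two real equations $aA-bB=0$, $aB+bA=0$ being the real and imaginary parts of your $(a+ib)(A+iB)=0$) and then disposes of an arbitrary butterfly point by transporting it to normal form via left/right multiplication by isometries $\sigma_1,\sigma_2$, defining $g_L=\sigma_1^{-1}(\{z\text{-axis}\})$, $g_R=\sigma_2^{-1}(\{z\text{-axis}\})$ and leaving the equivalence as ``one can check''. You instead compute the pseudo spherical condition directly at a general butterfly point $(0:vw^t:\lambda w:\mu v:-2\lambda\mu)$ and obtain the factorization $(\langle w,p\rangle+\mu)(\langle v,P\rangle+\lambda)=0$, which is a correct identity (using $Mp=\langle w,p\rangle v$, $\langle p,x\rangle=\lambda\langle w,p\rangle$, $\langle y,P\rangle=\mu\langle v,P\rangle$ and $N=(r+2\lambda\mu)vw^t$); the isotropy of $v$ and $w$ then does give real lines, and your gauge analysis is the right way to see both well-definedness and injectivity. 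This buys an explicit, coordinate-level description of the correspondence and actually fills in the detail the paper waves at, at the price of the orientation bookkeeping you flag. Two small points to tighten: the full ambiguity in writing a fixed projective point as $(0:vw^t:\lambda w:\mu v:r)$ is $(v,w)\mapsto(t_1v,t_2w)$ with the overall projective rescaling absorbed, not just $(tv,t^{-1}w)$ --- the same computation shows the lines and orientations survive; and your convention $\mathrm{Re}(w)\times\mathrm{Im}(w)$ gives $(0,0,1)$ for $w=(1,i,0)$, whereas the paper's identification sends $(1{:}i{:}0)$ to the South pole $(0,0,-1)$, so to be consistent with Definition~\ref{definition:left_right} and Corollary~\ref{cor:butterfly} you must insert the antipodal sign you already anticipated. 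Neither issue affects the validity of the bijection itself.
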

\begin{proof}
	Suppose that $\beta_0 \in B$ is a butterfly point in the normal form (see
Subsection \ref{compactification:boundary:butterfly}): 
	\[ \beta_0 = (0: \underbrace{1:i:0:i:-1:0:0:0:0}_M: \underbrace{0:0:0}_{x}:
\underbrace{0:0:0}_{y}: 0). \]
	In this case we associate to $\beta_0$ the lines $g_L = g_R = \{
z\mathrm{-axis} \}$, both oriented to the South pole $(0,0,-1) \in S^2$. If we
instantiate the pseudo spherical condition for $(p, P)$ at $\beta$ given by
Equation (\ref{equation:leg_condition_bonds}) we get the relations:
	\begin{equation}
	\label{equation:butterfly}
		\left\{
		\begin{array}{l}
			a A - b B = 0 \\
			a B + b A = 0
	  \end{array}
		\right.
	\end{equation}
	Equation (\ref{equation:butterfly}) can be interpreted as: the vector $(a,
b)$ is parallel both to the vector $(A, -B)$ and to the vector $(B,
A)$. This is possible if and only if either $(a, b) = (0,0)$ or $(A,
B) = (0,0)$. Hence either $p$ is of the form $(0,0,c)$ (namely it lies on
$g_L$) or $P$ is of the form $(0,0,C)$ (namely it lies on $g_R$). 

	If $\beta \in B$ is an arbitrary butterfly point, then from Subsection
\ref{compactification:boundary:butterfly} we know that there exist isometries
$\sigma_1, \sigma_2 \in \SE$ such that $\sigma_1 \beta \sigma_2 = \beta_0$ is
in normal form. Then we associate to $\beta$ the pair of lines 
\[ (g_L, g_R) \; = \; \Big( \left(\sigma_1\right)^{-1}\big( \{ z\mathrm{-axis}
\} \big), \left(\sigma_2\right)^{-1}\big( \{ z\mathrm{-axis} \} \big) \Big) \]
	with orientation given by the left and right vectors of $\beta$. One can
check that the equivalence in the thesis holds. Conversely, given two oriented
lines $g_L$ and $g_R$ we can find isometries $\sigma_1, \sigma_2 \in \SE$ such
that $g_L = \sigma_1 \big( \{ z\mathrm{-axis} \} \big)$ and $g_R = \sigma_2
\big( \{ z\mathrm{-axis} \} \big)$, both oriented to the South pole $(0,0,-1)
\in S^2$. Then we associate to $(g_L, g_R)$ the butterfly point $\sigma_1
\beta \sigma_2$.
\end{proof}

\begin{cor} 
\label{cor:butterfly}
	Assume that $\beta \in B_{\Pi}$ is a butterfly bond of $\Pi$.
	Let $L,R \in S^2$ be the left and right vector of $\beta$.
	Then, up to permutation of indices $1, \dots, n$, there exists $m \leq n$ such
	that $p_1, \ldots, p_m$ are collinear on a line parallel to $L$,
	and $P_{m+1}, \ldots, P_n$ are collinear on a line parallel to $R$.

	Conversely, let $L,R \in S^2$ be two unit vectors such that 
	$p_1, \ldots, p_m$ are collinear on a line parallel to $L$, and
	$P_{m+1}, \ldots, P_n$ are collinear on a line parallel to $R$.
	Then $\Pi$ has a butterfly bond with left vector $L$ and right vector~$R$.
\end{cor}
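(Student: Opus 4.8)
The plan is to derive the whole statement from the one-to-one correspondence of Theorem \ref{theorem:butterfly}, reading the $n$ pseudo spherical conditions leg by leg as a disjunction and then partitioning the indices accordingly. The correspondence does all the geometric work; what remains is bookkeeping about which leg satisfies which alternative.

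First I would handle the forward direction. Since $\beta \in B_{\Pi}$ is a butterfly bond, Theorem \ref{theorem:butterfly} attaches to it a pair $(g_L, g_R)$ of oriented lines, where $g_L$ is parallel to $L$ and $g_R$ is parallel to $R$, the orientations being precisely the left and right vectors of $\beta$. Being a bond means $\beta$ satisfies the pseudo spherical condition of Equation \eqref{equation:leg_condition_bonds} for every leg $(p_i, P_i)$, and by Theorem \ref{theorem:butterfly} this is equivalent to the condition $p_i \in g_L$ or $P_i \in g_R$ holding for each $i \in \{1, \ldots, n\}$. I would then set $S := \{i : p_i \in g_L\}$; every index outside $S$ must satisfy $P_i \in g_R$. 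After relabeling so that $S = \{1, \ldots, m\}$, the points $p_1, \ldots, p_m$ all lie on $g_L$ and hence are collinear on a line parallel to $L$, while $P_{m+1}, \ldots, P_n$ all lie on $g_R$ and hence are collinear on a line parallel to $R$; the degenerate cases $m = 0$ and $m = n$ are covered vacuously.

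For the converse I would run the same correspondence backwards. Given $L, R \in S^2$ and the hypothesis that $p_1, \ldots, p_m$ lie on a line $g$ parallel to $L$ and $P_{m+1}, \ldots, P_n$ lie on a line $g'$ parallel to $R$, I orient $g$ by $L$ and $g'$ by $R$ and invoke the converse part of Theorem \ref{theorem:butterfly} to obtain the butterfly point $\beta$ associated to the pair $(g, g')$, whose left and right vectors are $L$ and $R$ by construction. For $i \leq m$ we have $p_i \in g$, and for $i > m$ we have $P_i \in g'$, so in either case the condition $p_i \in g$ or $P_i \in g'$ holds; by Theorem \ref{theorem:butterfly} this means $\beta$ lies on the hyperplane $\{l_i = 0\}$ for every leg. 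Since $\beta$ is a boundary point, lying in all these hyperplanes is exactly the defining condition for membership in $B_{\Pi}$, so $\beta \in B_{\Pi}$ is the desired butterfly bond.

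I do not expect a serious analytic obstacle here, as the argument is essentially a bookkeeping exercise once Theorem \ref{theorem:butterfly} is available. The only point needing care is the partition step: a single leg may satisfy both $p_i \in g_L$ and $P_i \in g_R$, making the index set $S$ and hence the value of $m$ and the required permutation non-canonical. This ambiguity is harmless, since each such index can be assigned to either class, but I would state the splitting explicitly as platform points lying on $g_L$ versus base points lying on $g_R$ so that the relabeling is unambiguous.
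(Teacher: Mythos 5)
Your proposal is correct and follows exactly the route the paper intends: the paper states Corollary \ref{cor:butterfly} without proof as an immediate consequence of Theorem \ref{theorem:butterfly}, and your argument simply spells out that deduction (read the pseudo spherical condition leg by leg as the disjunction $p_i \in g_L$ or $P_i \in g_R$, partition the indices, and run the correspondence backwards for the converse). Your remark that an index may satisfy both alternatives, making $m$ non-canonical but harmless, is a sensible clarification of a point the paper leaves implicit.
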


\begin{notation}
	Recall from Subsection \ref{compactification:boundary:collinearity} that the
set of collinearity points is partitioned into two subsets: if the
$y$--coordinate of a collinearity point is zero we call it a \emph{left
collinearity point}, while if the $x$--coordinate is zero we call it a
\emph{right collinearity point}.
\end{notation}

\begin{theorem}
\label{theorem:collinear}
	There is a one-to-one correspondence between left (resp. right) collinearity
	points $\beta$	and oriented lines $g$ in $\R^3$ such that for any pair of
	points $(p, P)$ in	$\R^3$ the pseudo spherical condition for $(p, P)$ at
	$\beta$ is equivalent to	the fact that $p \in g$ (resp. $P \in g$).
\end{theorem}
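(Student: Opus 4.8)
The proof follows the now-familiar pattern established in Theorems \ref{theorem:inversione-similarity} and \ref{theorem:butterfly}: first handle a normal-form representative by direct computation, then transport the correspondence to arbitrary collinearity points via the left and right actions of $\SE$. By symmetry it suffices to treat left collinearity points (those with $y = 0$), the case of right collinearity points being entirely analogous with the roles of $p$ and $P$ interchanged.

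First I would take $\beta_0 \in B$ to be a left collinearity point in its normal form, namely
\[ \beta_0 = (0: \underbrace{0:0:0:0:0:0:0:0:0}_M: \underbrace{1:i:0}_{x}: \underbrace{0:0:0}_{y}: 0), \]
and associate to it the oriented line $g = \{ z\text{-axis} \}$, oriented toward the South pole $(0,0,-1) \in S^2$ in accordance with the identification of Subsection \ref{compactification:boundary:collinearity}. Substituting $\beta_0$ into the pseudo spherical condition \eqref{equation:leg_condition_bonds}, and writing $p = (a,b,c)$, the terms involving $M$, $y$ and $r$ all vanish, so the condition collapses to $-2\langle p, x\rangle = 0$, i.e. $a + ib = 0$. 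Since $a, b$ are real this is equivalent to $a = b = 0$, that is, to $p$ lying on the $z$-axis. Thus the pseudo spherical condition at $\beta_0$ is precisely the condition $p \in g$, with no constraint on $P$ whatsoever. The converse direction for the normal form is immediate by reading the same computation backwards.

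For an arbitrary left collinearity point $\beta$, Subsection \ref{compactification:boundary:collinearity} guarantees isometries $\sigma_1, \sigma_2 \in \SE$ with $\sigma_1 \beta \sigma_2 = \beta_0$ in normal form. I would then associate to $\beta$ the oriented line $g = (\sigma_1)^{-1}(\{z\text{-axis}\})$, with orientation supplied by the left vector of $\beta$. One checks that the pseudo spherical condition for $(p,P)$ at $\beta$ is equivalent to that for $(\sigma_1(p), \sigma_2(P))$ at $\beta_0$ --- this is exactly the compatibility of the pseudo spherical condition with the left and right actions already exploited in the two preceding theorems --- and the latter reads $\sigma_1(p) \in \{z\text{-axis}\}$, i.e. $p \in (\sigma_1)^{-1}(\{z\text{-axis}\}) = g$, as desired. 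The converse follows by choosing $\sigma_1, \sigma_2$ carrying the $z$-axis to $g$ and assigning to $g$ the collinearity point $\sigma_1 \beta_0 \sigma_2$.

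The only genuinely delicate point, as in Theorem \ref{theorem:butterfly}, is the well-definedness of the assignment $\beta \mapsto g$: the isometry $\sigma_1$ bringing $\beta$ to normal form is not unique, so one must verify that the resulting line $g$ together with its orientation depends only on $\beta$. This is handled by observing that any two such $\sigma_1$ differ by an isometry fixing $\beta_0$, which in turn must fix the $z$-axis as an oriented line (the orientation being pinned down by the left vector, via Definition \ref{definition:left_right}); hence $g$ is independent of the choices made. With well-definedness established, the asserted equivalence and its converse follow formally, completing the correspondence.
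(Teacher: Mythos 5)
Your overall strategy is exactly the paper's: compute with the normal form, then transport the correspondence to arbitrary collinearity points by the group actions (the paper literally says ``proceed as in the proof of Theorem \ref{theorem:butterfly}''). The normal-form computation is correct and identical to the paper's: the pseudo spherical condition at $\beta_0$ collapses to $-2(a+ib)=0$, i.e.\ $a=b=0$, i.e.\ $p$ lies on the $z$-axis, with no constraint on $P$.

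There is, however, a concrete slip in the transport step. For a left collinearity point the condition constrains the \emph{platform} point $p$, and $p$ transforms under the \emph{right} action, not the left one: from Equation \eqref{equation:product} one checks that the pseudo spherical condition for $(p,P)$ at $\sigma_1\gamma\sigma_2$ is equivalent to the condition for $(\sigma_2(p),\sigma_1^{-1}(P))$ at $\gamma$ (post-composition is undone by moving $P$, pre-composition by moving $p$), not for $(\sigma_1(p),\sigma_2(P))$ as you assert. Consequently the line attached to $\beta$ must be $g=\sigma_2(\{z\text{-axis}\})$, not $(\sigma_1)^{-1}(\{z\text{-axis}\})$. This is not merely a relabelling: for a left collinearity point one has $M=0$ and $y=0$, so by Equation \eqref{equation:product} the left action of \emph{every} $\sigma_1\in\SE$ fixes $\beta$ outright (it only touches $M$, $y$, and $r$ through $y$). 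Hence $\sigma_1$ in a factorization $\sigma_1\beta\sigma_2=\beta_0$ is completely arbitrary, $(\sigma_1)^{-1}(\{z\text{-axis}\})$ can be any line in $\R^3$, and your well-definedness argument (``any two such $\sigma_1$ differ by an isometry fixing $\beta_0$, hence fixing the $z$-axis'') fails at its first step. The repair is immediate: use $\sigma_2$ throughout; its ambiguity is the right stabilizer of $\beta_0$, which consists of screw motions about the $z$-axis and does preserve the oriented $z$-axis, so with that correction your well-definedness argument goes through and the proof matches the paper's.
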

\begin{proof}
	Suppose that $\beta_0 \in B$ is a left collinearity point and suppose that it
is in normal form (see Subsection
\ref{compactification:boundary:collinearity}):
	\[ \beta = (0: \underbrace{0:0:0:0:0:0:0:0:0}_M: \underbrace{1:i:0}_{x}:
	\underbrace{0:0:0}_{y}:0) , \]
	We associate to $\beta_0$ the line $g = \{ z\mathrm{-axis} \}$, directed to
the South pole $(0,0,-1) \in S^2$. If we instantiate the pseudo spherical
condition for $(p, P)$ at $\beta_0$ given by Equation
(\ref{equation:leg_condition_bonds}) we get the relations:
	\[ 0 \; = \; -2(a + i b) \quad \Leftrightarrow \quad a = b = 0 \]
	which is equivalent to $p \in g$. 

	If $\beta \in B$ is an arbitrary left collinearity point we proceed as in the
proof of Theorem \ref{theorem:butterfly}. Analogous arguments prove the
statement about right collinearity points.
\end{proof}

\begin{cor} 
\label{cor:collinear}
	Assume that $\beta \in B_{\Pi}$ is a collinearity bond of $\Pi$.
	Then either $p_1, \ldots, p_n$ are collinear or $P_1, \ldots, P_n$ are collinear (or both).

	Conversely, if $p_1,\dots,p_n$ are collinear or $P_1, \ldots, P_n$ are
collinear (or both), then $\Pi$ has a collinearity bond.
\end{cor}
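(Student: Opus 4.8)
The plan is to derive both implications directly from Theorem \ref{theorem:collinear}, which characterizes a single collinearity point by an oriented line, combined with the observation that a bond must satisfy the pseudo spherical condition for \emph{every} leg simultaneously. The crucial point is that the line $g$ produced by Theorem \ref{theorem:collinear} depends only on $\beta$, and hence is the same across all $n$ legs.

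For the forward direction, I would start from a collinearity bond $\beta \in B_\Pi$. By the notation preceding Theorem \ref{theorem:collinear}, $\beta$ is either a left or a right collinearity point; these two cases are symmetric under interchange of base and platform, so I treat the left case. Applying Theorem \ref{theorem:collinear} to $\beta$ yields an oriented line $g \subseteq \R^3$ such that, for any pair $(p,P)$, the pseudo spherical condition for $(p,P)$ at $\beta$ is equivalent to $p \in g$. Since $\beta$ is a bond of $\Pi$, it lies in $K_\Pi$ and therefore satisfies Equation \eqref{equation:leg_condition_bonds} for every leg $(p_i, P_i)$; that is, the pseudo spherical condition holds for each $(p_i, P_i)$ at $\beta$. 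The theorem then forces $p_i \in g$ for all $i \in \{1, \dots, n\}$, so $p_1, \dots, p_n$ all lie on the single line $g$ and are collinear. The right case gives collinearity of $P_1, \dots, P_n$ in the same way.

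For the converse, suppose $p_1, \dots, p_n$ are collinear, lying on a common line $g \subseteq \R^3$; endow $g$ with an arbitrary orientation. By the one-to-one correspondence of Theorem \ref{theorem:collinear}, there is a left collinearity point $\beta \in B$ whose associated oriented line is $g$, and for which the pseudo spherical condition at $\beta$ reduces to the membership $p \in g$. Evaluating this at each leg $(p_i, P_i)$ and using $p_i \in g$ shows that $\beta$ satisfies Equation \eqref{equation:leg_condition_bonds} for all $i$, hence $\beta \in K_\Pi$. As $\beta$ is a boundary point ($h = 0$), it belongs to $B_\Pi = K_\Pi \cap \{h = 0\}$ and is by construction a collinearity bond. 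The case in which $P_1, \dots, P_n$ are collinear is handled symmetrically using a right collinearity point.

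The argument is essentially a bookkeeping exercise once Theorem \ref{theorem:collinear} is available; the only thing to watch is that the reduction of the pseudo spherical condition to a linear membership condition uses the \emph{same} line $g$ for all legs, which is guaranteed because $g$ is determined by $\beta$ alone. I do not expect any genuine obstacle, the main care being to keep the left/right (platform/base) dichotomy straight and to note that the ``or both'' in the statement simply reflects the possibility that $\Pi$ admits both a left and a right collinearity bond.
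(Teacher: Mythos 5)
Your proof is correct and follows exactly the route the paper intends: the paper states Corollary \ref{cor:collinear} without an explicit proof precisely because it is the immediate consequence of Theorem \ref{theorem:collinear} that you spell out, namely that the line $g$ is determined by the bond $\beta$ alone and the pseudo spherical condition must hold for every leg. Your careful bookkeeping of the left/right dichotomy and of the converse construction matches the (implicit) argument of the paper.
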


The following Corollary gives a necessary criterion for mobility of $n$--pods. 
For the fist time (to the authors' knowledge) a necessary criterion
for the mobility of hexapods can be defined by the invariant
linkage parameters, irrespective of a specific configuration. (The well-known
criterion for infinitesimal mobility, see \cite{Merlet89}, refers to an explicit
relative pose of the platform with respect to the base.)

\begin{cor}
\label{corollary:mob1}
If an $n$--pod is mobile, then one of the following conditions holds:
	\begin{enumerate}[(i)]
		\item
			There exists at least one pair of orthogonal projections $\pi_L$ and
$\pi_R$ such that the projections of the platform points $p_1, \ldots, p_n$ by
$\pi_L$ and of the base points $P_1, \ldots, P_n$ by $\pi_R$ differ by an
inversion or a similarity.
		\item
			There exists $m \leq n$ such that $p_1, \ldots, p_m$ are collinear and
$P_{m+1}, \ldots, P_n$ are collinear, up to permutation of indices.
	\end{enumerate}
\end{cor}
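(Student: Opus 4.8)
The plan is to reduce the statement to a case analysis over the type of a single bond. First I would invoke Remark \ref{remark:mobility_bonds}: since $\Pi$ is mobile, its complex configuration set $K_\Pi$ has dimension at least $1$, and because $B_\Pi$ is a hyperplane section of $K_\Pi$ (cut out by $h = 0$), the dimension drops by at most one, so $B_\Pi$ is nonempty. Hence I may fix a bond $\beta \in B_\Pi$.

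Next I would classify $\beta$ using the partition of the boundary $B$ established in Subsection \ref{compactification:boundary}. Every boundary point is the vertex, an inversion point, a butterfly point, a similarity point, or a collinearity point. By Remark \ref{remark:vertex_no_bond} the vertex is never a bond, so $\beta$ must fall into one of the four remaining classes.

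Then I would dispatch each class by quoting the corresponding corollary. If $\beta$ is an inversion or similarity point, Corollary \ref{cor:inversion-similarity} produces a pair $(L,R)$ of directions in $S^2$ together with an inversion or similarity of $\R^2$ carrying $\pi_L(p_1), \ldots, \pi_L(p_n)$ to $\pi_R(P_1), \ldots, \pi_R(P_n)$, which is exactly condition (i). If $\beta$ is a butterfly point, Corollary \ref{cor:butterfly} yields, after permuting indices, an integer $m \leq n$ with $p_1, \ldots, p_m$ collinear and $P_{m+1}, \ldots, P_n$ collinear, which is condition (ii) verbatim. Finally, if $\beta$ is a collinearity point, Corollary \ref{cor:collinear} forces either $p_1, \ldots, p_n$ or $P_1, \ldots, P_n$ to be collinear; taking $m = n$ in the first case and $m = 0$ in the second (so that one of the two collinearity requirements becomes vacuous) again lands us in condition (ii). Since these cases are exhaustive, the corollary follows.

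The argument is essentially bookkeeping once the three corollaries are in hand, so I do not expect a genuine obstacle. The one point requiring a little care is the collinearity case: I must confirm that the degenerate choices $m = 0$ and $m = n$ are admitted by the formulation of (ii), so that a bond forcing collinearity of only one of the two point sets still satisfies the stated alternative. Checking that the \emph{up to permutation of indices} clause is preserved across both the butterfly and the collinearity cases is the other minor verification.
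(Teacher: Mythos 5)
Your proposal is correct and follows essentially the same route as the paper: nonemptiness of $B_\Pi$ via Remark \ref{remark:mobility_bonds}, then a case split over the bond type resolved by Corollaries \ref{cor:inversion-similarity}, \ref{cor:butterfly} and \ref{cor:collinear}. You are somewhat more careful than the paper in explicitly excluding the vertex via Remark \ref{remark:vertex_no_bond} and in noting that the degenerate choices $m=0$ and $m=n$ make the collinearity case fit condition (ii), but these are refinements of the same argument, not a different one.
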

\begin{proof}
	Since by hypothesis $K_{\Pi} \cap \SEC$ has dimension $\geq 1$, it follows
that $B_{\Pi}$ is not empty (see Remark \ref{remark:mobility_bonds}). Hence
there is at least one inversion/similarity/ collinearity/butterfly bond, and
then the result follows from Corollaries (\ref{cor:inversion-similarity}),
(\ref{cor:butterfly}) and (\ref{cor:collinear}).
\end{proof}

\begin{remark}
	As the bonds are determined by the invariant linkage parameters, they are
independent of the leg lengths. As a consequence a hexapod, which has $40$
solutions for the direct kinematics over $\mathbb C$ (see\ Remark
\ref{rem:cardinality}), is free of bonds and therefore also free of 
self-motions. 
	Due to the fact that condition (i) of Corollary \ref{corollary:mob1} is not
sufficient, the converse is not true; i.e.\ there exist hexapods with less
than $40$ solutions for the direct kinematics, which are free of self-motions
(e.g.\ hexapods where the platform and base are planar and projective --- but
not affine --- equivalent \cite{nawratilproj}).
\end{remark}

We conclude stating our last result, concerning constraints on base and
platform points of $n$--pods with higher mobility.

\begin{theorem}
\label{theorem:mobility_two}
	Let $\Pi$ be an $n$--pod with mobility $2$ or higher. Then one of the
following holds:
	\begin{itemize}
		\item[(a)]
			there are infinitely many pair $(L,R)$ of elements of $S^2$ such that the
points $\pi_L(p_1), \ldots, \pi_L(p_n)$ and $\pi_R(P_1), \ldots, \pi_R(P_n)$
differ by an inversion or a similarity;
		\item[(b)]
			there exists $m \leq n$ such that $p_1, \ldots, p_m$ are collinear and 
$P_{m+1} = \ldots = P_n$, up to permutation of indices and interchange between
base and platform;
		\item[(c)]
			there exists $m \leq n$ with $1 < m < n-1$  such that $p_1, \ldots, p_m$
lie on a line $g \subseteq \R^3$ and $p_{m+1}, \ldots, p_n$ lie on a line $g'
\subseteq \R^3$ parallel to $g$, and $P_1, \ldots, P_m$ lie on a line $G
\subseteq \R^3$ and $P_{m+1}, \ldots, P_n$ lie on a line $G' \subseteq \R^3$
parallel to $G$, up to permutation of indices.
	\end{itemize}
\end{theorem}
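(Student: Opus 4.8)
The plan is to turn the dimension hypothesis into a statement about the bond set and then read off the three alternatives from the type of boundary point that occurs infinitely often. By Remark~\ref{remark:mobility_bonds}, the hypothesis $\dim_{\C}(K_\Pi \cap \SEC) \geq 2$ forces $\dim_{\C} B_\Pi \geq 1$, so $B_\Pi$ is a positive--dimensional complex variety; I would fix an irreducible component $Z \subseteq B_\Pi$ with $\dim_{\C} Z \geq 1$. Since the vertex is the unique point with $M = x = y = 0$ and is never a bond (Remark~\ref{remark:vertex_no_bond}), and since the loci $M = 0$, $N = 0$, $x = 0$, $y = 0$ are closed, irreducibility of $Z$ means that its generic point is exactly one of the four types of Subsection~\ref{compactification:boundary}: inversion, butterfly, similarity, or collinearity. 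I will show that the collinearity and butterfly cases give (b), the similarity case gives (b), and the inversion case gives (a) or (c).

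The two easy cases come first. If the generic point of $Z$ is a collinearity point, then $\Pi$ has a collinearity bond, so Corollary~\ref{cor:collinear} yields that all $p_i$ or all $P_i$ are collinear; taking $m=n$ and interchanging base and platform if necessary, this is alternative (b). If the generic point of $Z$ is a butterfly point, then by Theorem~\ref{theorem:butterfly} every bond in $Z$ supplies oriented lines $(g_L,g_R)$ with $p_i \in g_L$ or $P_i \in g_R$ for all $i$; grouping by the finitely many subsets $A = \{ i : p_i \in g_L \}$, one such $A$ occurs for infinitely many bonds, giving infinitely many pairs $(g_L,g_R)$ with $p_i \in g_L$ for $i \in A$ and $P_j \in g_R$ for $j \notin A$. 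Because two distinct points lie on a unique line, an infinite pencil of lines through a fixed point set forces that set to be a single point; hence at least one of $\{ p_i : i \in A \}$, $\{ P_j : j \notin A \}$ collapses to one point, while the other must be collinear for its pencil to be nonempty. Up to the interchange of base and platform, this is exactly alternative (b).

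The heart of the proof is the inversion/similarity case. The key point is that for a fixed real pair $(L,R)$ the inversion (resp.\ similarity) bonds sharing this pair correspond, by the computation underlying Theorem~\ref{theorem:inversione-similarity}, to planar maps $\kappa$ that are \emph{M\"obius transformations} (in suitable coordinates of the form $z \mapsto \frac{-\lambda z + r/2}{z+\mu}$) resp.\ \emph{affine similarities} $z \mapsto \alpha z + \beta$, subject to $\kappa(q_i) = Q_i$ for all $i$, where $q_i = \pi_L(p_i)$ and $Q_i = \pi_R(P_i)$. I would then examine the map $Z \to S^2 \times S^2$ sending a bond to its pair $(L,R)$. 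If the image is infinite, Corollary~\ref{cor:inversion-similarity} produces infinitely many pairs $(L,R)$ whose projected point sets differ by an inversion or a similarity, which is alternative (a). If the image is finite, some fibre is positive--dimensional, so for one fixed $(L,R)$ there are infinitely many admissible $\kappa$. Since a M\"obius transformation is pinned down by three point images and an affine similarity by two, an infinite family of matching $\kappa$ forces the $q_i$ to take at most two (resp.\ at most one) distinct values; moreover $q_i = q_j$ implies $Q_i = \kappa(q_i) = \kappa(q_j) = Q_j$, so the $Q_i$ take correspondingly many values with the \emph{same} index partition. Translating back through $\pi_L$ and $\pi_R$, the $p_i$ lie on at most two lines parallel to $L$ and the $P_i$ on at most two lines parallel to $R$, with aligned index blocks: this is alternative (c) when both blocks have size at least two (so $1 < m < n-1$), and it degenerates to alternative (b) when a block is a single point (a lone point forms a trivially coinciding family) or when only one value occurs (a collinear platform or base).

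The step I expect to be the main obstacle is this final reconstruction: passing rigorously from a positive--dimensional fibre of inversion bonds to the bound ``at most two distinct projected points,'' and then sorting the resulting splittings into (c) versus the boundary cases that collapse to (b). In particular, I will need to verify that the index partitions on the platform and on the base genuinely coincide, and that the constraint $1 < m < n-1$ of alternative (c) holds precisely when neither block is a singleton. Some care is also required in the bookkeeping between the complex variety $Z$ and the real data $(L,R) \in S^2 \times S^2$, to ensure that an infinite image really yields infinitely many \emph{real} pairs as demanded by alternative (a).
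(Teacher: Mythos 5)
Your proposal follows essentially the same route as the paper's proof: infinitely many bonds from Remark \ref{remark:mobility_bonds}, a case split on the bond type, collinearity and butterfly bonds yielding (b), and for inversion/similarity bonds either infinitely many pairs $(L,R)$ (case (a)) or a fixed pair with infinitely many maps, which by the ``three points determine an inversion, two a similarity'' count forces at most two projected points on each side and hence (c) or a degeneration to (b). Your treatment is correct, and in fact slightly more careful than the paper's at the butterfly grouping step and in sorting the singleton-block degenerations of (c) back into (b).
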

\begin{proof}
	Since $\Pi$ has mobility at least $2$, it has infinitely many bonds (see
Remark \ref{remark:mobility_bonds}). Assume that $\Pi$ admits one collinearity
bond, then we have b) with $m = n$. Assume that it admits infinitely many
butterfly points, then in particular by Corollary \ref{cor:butterfly} there
exists $m \leq n$ such that $p_1, \ldots, p_m$ are collinear and $P_{m+1},
\ldots, P_n$ lie on infinitely many different lines, and therefore we have (b).
Hence we are left with the case when we have infinitely many inversion or
similarity bonds. If these bonds provide infinitely many different left and
right vectors, we are in case (a). Otherwise we have that there are infinitely
many inversion or similarity points with the same left and right vectors
$(L,R)$. We want to argue that in this case both sets $\mcal{U} = \big\{
\pi_L(p_1), \ldots, \pi_L(p_n) \big\}$ and $\mcal{V} = \big\{ \pi_R(P_1),
\ldots, \pi_R(P_n) \big\}$ consist of two points. In fact by Corollary
\ref{cor:inversion-similarity} the inversion/similarity associated to
these bonds maps $\pi_L(p_i)$ to $\pi_R(P_i)$, so $\mcal{U}$ and $\mcal{V}$
have the same cardinality; on the other hand any inversion or similarity is
completely specified if we prescribe the image of three points, so if the
cardinality of $\mcal{U}$ were greater than $2$ then we would have only one
inversion or similarity. Moreover we can exclude the case when both $\mcal{U}$
and $\mcal{V}$ are given by one point, since this falls in case (b). Hence $p_1,
\ldots, p_n$ are arranged on two parallel lines, and the same holds for $P_1,
\ldots, P_n$. From this and the fact that the inversions/similarities should
map $\pi_L(p_i)$ to $\pi_R(P_i)$ it follows that the only possible
configurations are the ones described in (c). As a side remark, since two
points fix a similarity it follows that in this case we have just one
similarity point and infinitely many inversion points.
\end{proof}

As already mentioned in the Introduction, we can also formulate some geometric
conditions on base and platform points in case (a) of Theorem
\ref{theorem:mobility_two}. This can be done by a new technique, called
\emph{M\"obius Photogrammetry}, which is developed by the authors in
\cite{GalletNawratilSchicho}. Moreover it should be noted that based on the
results obtained with this method a complete classification of pentapods with
mobility $2$ was achieved in \cite{NawratilSchicho1, NawratilSchicho2}.

\section*{Acknowledgments}

The first and third author's research is supported by the Austrian Science Fund (FWF): W1214-N15/DK9 and P26607 - ``Algebraic Methods in Kinematics: Motion Factorisation and Bond Theory''. The second author's research is funded by the Austrian Science Fund (FWF): P24927-N25 - ``Stewart Gough platforms with self-motions''.                                                                      

\bibliographystyle{plain}

\end{document}